\documentclass{article} 
\usepackage[preprint]{colm2026_conference}

\usepackage{microtype}
\usepackage{hyperref}
\usepackage{url}
\usepackage{booktabs}

\usepackage{xcolor}
\usepackage{graphicx}
\usepackage[inline]{enumitem}
\usepackage{amsmath}
\usepackage{amsthm}
\usepackage{stmaryrd}
\usepackage{float}
\usepackage{caption}
\usepackage{subcaption}

\usepackage{lineno}
\usepackage{tcolorbox}

\newtheorem{corollary}{Corollary}
\newtheorem{lemma}{Lemma}
\newtheorem*{remark}{Remark}

\definecolor{darkblue}{rgb}{0, 0, 0.5}
\hypersetup{colorlinks=true, citecolor=darkblue, linkcolor=darkblue, urlcolor=darkblue}

\title{Flout at Your Own Risk: LLMs Struggle with Pragmatic Cooperativity Under Epistemic Asymmetry}

\author{Hannah VanderHoeven, Abhijnan Nath \& Nikhil Krishnaswamy\\
Situated Grounding and Natural Language (SIGNAL) Lab\\
Department of Computer Science\\
Colorado State University\\
Fort Collins, CO 80523, USA \\
\texttt{\{hannah.vanderhoeven,nkrishna\}@colostate.edu} \\
}

\newcommand{\nk}[1]{\textcolor{teal}{#1}}

\begin{document}

\ifcolmsubmission
\linenumbers
\fi

\maketitle

\begin{abstract}
Fruitful collaborations rely on cooperative communications, including of contextual cues to incorporate into reasoning.
The increasing use of LLMs in collaborative and agentic pipelines raises questions about the extent to which they exhibit these pragmatic capabilities, especially in scenarios where they may not have access to the same information as their collaborators.
In this paper, we perform a novel investigation into the pragmatic reasoning capabilities of LLMs in a multi-party collaborative task under partial information conditions.
We formalize a notion of {\it collaborative epistemic asymmetry} that explicitly connects objective task success to Grice's cooperative principle and empirically assess various LLMs' abilities to act cooperatively as both speakers and listeners, including both prompting and post-training strategies.
Our results show that while LLMs exhibit certain pragmatic capabilities in collaborative settings, and these can be elicited through prompting and post-training, they still face challenges in pragmatic communication with incomplete information, and that certain failure modes do correlate with floutings of Grice's maxims that go unrecognized.
\end{abstract}

\vspace*{-3mm}
\section{Introduction}
\label{sec:intro}
\vspace*{-2mm}

Successful collaboration rests on establishing {\it common ground} \citep{clark1991grounding,clark1996using}. Collaborative groups can achieve outcomes exceeding the abilities and understanding of any individual member \citep{boyd2021group}, but only if they can effectively share perspectives.
Inherent in this is the adjustment of communications to be {\it cooperative}---expressing no more or less than the needed information, at the right time, in the appropriate way \citep{grice1975logic}.

The dialogue capabilities of large language models (LLMs) have led them to be incorporated into workflows across many domains, often as ``collaborators'' with humans, or with other AI systems in ``agentic'' pipelines \citep{butler2025microsoft,maslej2025artificialintelligenceindexreport}. 
In large part this has come about because they appear to be fluent, cooperative communicators; the implicit assumption is that LLMs and agentic systems driven by them will exchange information and interpret collaborator needs and instructions in a way that is roughly equivalent to the cooperativity exhibited by humans \citep{zarriess2019know,guo2026embodied}. 
However, evidence regarding the pragmatic competence of LLMs is mixed at best \citep{nguyen2023language,jian2024llms,park2024multiprageval,sravanthi2024pub,ma2025pragmatics,ma2025vision,eisenstein2026mt, nath2026craft}. 
This calls into question whether current LLMs truly possess the pragmatic capabilities required to be good collaborators, or if their apparent demonstrations of pragmatic competence are illusory. 
This question is particularly relevant in {\it multiparty} collaborations with more than 2 agents (human or AI), because each agent may labor under "false assumptions" about other agents and how they communicate.

This paper presents a first of its kind examination of common LLMs' cooperative communication capabilities in a challenging multi-agent, partially-observable collaborative reasoning task. Fig.~\ref{fig:overview} shows an overview of our approach. Through a focus on agent-agent collaborations, we address the following research questions.
\begin{enumerate*}
    \item[\bf RQ1:] In multiagent collaborations, do common LLMs behave more like {\it literal} or {\it pragmatic} listeners?
    \item[\bf RQ2:] To what extent can greater pragmatic listening capabilities be elicited through prompting and encouraging exploration?
    \item[\bf RQ3:] To what extent can speaker agents be made more robust against pragmatically suboptimal listeners through offline alignment?
\end{enumerate*}

Through a novel theoretical formulation of {\it collaborative epistemic asymmetry} (CEA), we explicitly connect Gricean maxims to quantitative task outcomes. We perform an exploration of LLMs pragmatic reasoning capabilities through prompting, exploration, and post-training strategies, demonstrate where failures in collaborative task performance can be connected to maxim flouting by the agents.
Our results show that while LLMs have some pragmatic speaking, listening, and reasoning capabilities in collaborative tasks, they still encounter challenges with cooperativity in the incomplete information setting and may flout cooperative maxims in ways that correlate with degraded collaborative task performance.

\begin{figure}
    \centering
    \includegraphics[width=.95\linewidth,clip,trim={20 0 20 0}]{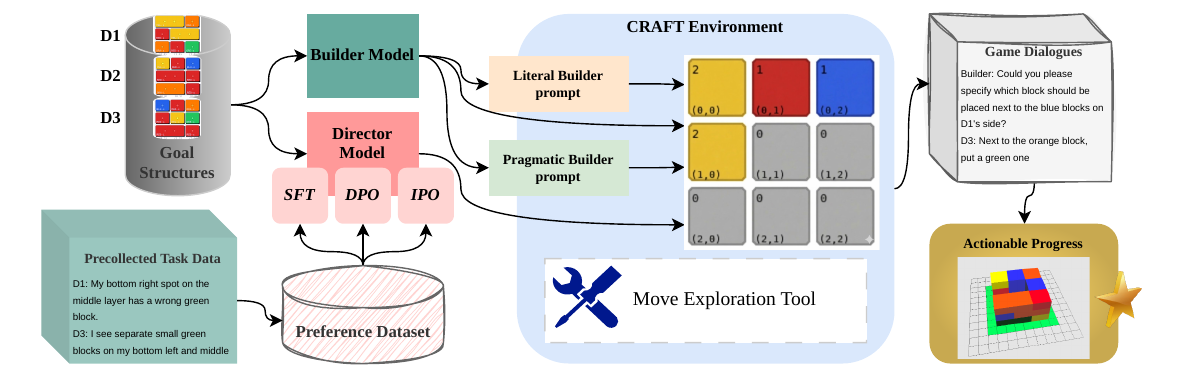}
\vspace*{-2mm}
    \caption{High level overview of our experimental pipeline.}
    \label{fig:overview}
\vspace*{-2mm}
\end{figure}

\vspace*{-2mm}
\section{Related Work}
\label{sec:related}
\vspace*{-2mm}


LLMs have proven to be grammatically fluent language generators and linguistically competent ``listeners'' \citep{kibria2024functional,hardt2025sparks,ma2025pragmatics}, yet tend to be pragmatically fragile, often missing meaning that goes beyond words. \cite{grice1975logic}'s influential \textit{maxims of conversation} defined how accurate, concise, relevant, and clear comminication enhances group trust and interaction efficiency. These maxims have proven challenging to operationalize computationally \citep{saygin2002pragmatics}, in part due to the depth of hard-to-quantify social nuances in communication \citep{jurafsky2006pragmatics,floyd2025three}. 

According to the Rational Speech Act (RSA) framework \citep{goodman2013knowledge}, which is built on the assumption that speakers are rational and informative, the utterances of an optimally pragmatic speaker ($S_1$) will maximize the probability that a listener's {\it literal} interpretation ($L_0$) will correctly infer their intended meaning. The RSA framework has been used to show that humans reason recursively as pragmatic speakers and listeners, however similar applications have proven challenging to scale up in Natural Language Processing (NLP) \citep{frank2016rational, goodman2016pragmatic}. Instead, certain specific inferential problems have received the lion's share of attention \citep{jurafsky2006pragmatics}, such as \textit{discourse structure and coherence relations}, \textit{reference resolution}, and \textit{abduction} \citep{nath2024okay,nath2024any,gan2025improving,min2025multi,zhu2025llmlink}.

For \textit{collaborators}, inferring referents and intended actions from each other's statements is key to task progression \citep{jurafsky2006pragmatics}. This often necessitates {\it perspective-taking} to achieve common ground with interlocutors \citep{filipi2004perspective,cantiani2024perspective,davidson2025collaboration} while remaining anchored in one's own private knowledge \citep{katsos2023perspective}. This rests on a {\it theory of mind} \citep{premack1978does}, which is challenging for LLMs \citep{ullman2023large,hu2025re}. Multiple RSA and MDP-based frameworks describe how agents maintain beliefs about the world and each other's beliefs and capabilities \citep{langlois2021rl,nath2025learning,maeda2026gesturing}, and retain robustness to non-cooperative or adversarial speakers \citep{gmytrasiewicz2005framework,gmytrasiewicz2020things}. \cite{estienne-etal-2025-collaborative} empirically improve performance in a partial-information reference game by optimizing cooperativity as a gain function. Dynamic epistemic logic (DEL) approaches explore necessary communicative strategies for collaborative inference \citep{van2011dynamic,van2014evidence,pacuit2017neighborhood,khebour2024common}. Prompting interventions can help LLMs reason more pragmatically, but remain limited in tasks with higher-order prerequisites like perspective-taking \citep{wilf2024think, just2025dipt, ma2025vision}. In grounded multiagent LLM collaboration~\citep{andreas2016reasoning}, recent work  has addressed maze-solving~\citep{davidson2025collaboration} and 3D block building~\citep{wu2024coworkersmatterevaluatingcollaborative} including in partial-information settings ~\citep{nath2026craft} but few works directly analyze collaborative failures through a Gricean lens. In our work, all agents are set up to seek collaborative success rather than adversarial outcomes, and so we focus on \textit{floutings} of Gricean maxims such as accidental rule-breaking or omission of important task-relevant information, rather than intentional {\it violations} to hide meaning.



\vspace*{-3mm}
\section{Collaborative Epistemic Asymmetry: Actionability Requires Cooperativity}
\label{sec:theory}
\vspace*{-2mm}


Our work unifies the above challenges into a novel investigation into the pragmatic reasoning capabilities of LLMs in a \textit{ multi-turn, multi-agent collaboration} under \textit{partial information and observability conditions}. Tasks of this nature, e.g., \cite{zhu2025multimodal,zhu2026distributed}, simulate situations where parties with different background knowledge and capabilities must bring them together to solve a problem, {\it but no one individual knows the solution at the outset}. We refer to this as {\bf collaborative epistemic asymmetry} (CEA). This is commonplace in real-world scenarios like ``jigsaw'' problem solving in classrooms \citep{perkins2011jigsaw}, challenging medical diagnostics \citep{poradzisz2019collaboration}, or even military exercises. In these situations, collaborators must be cooperative in their utterances and instructions to communicate the necessary information for progress toward the shared goal. This leads to a core insight of our work: {\it in collaborative task environments with shared goals but epistemic asymmetry, the most \textbf{cooperative} utterances are those that are most \textbf{actionable}.} We show this is a valid assumption.

\vspace*{-2mm}
\paragraph{Definitions} Let $\mathcal{W}$ be the \textbf{world states} and $\mathcal{U}$ be the \textbf{utterances} in a collaborative task under epistemic asymmetry. Let $G \in \mathcal{W}$ be the \textbf{shared goal}, or world state in which the shared problem is solved. No individual knows $G$ exactly; each agent $i$ has a prior $P_i(G)$ over possible goals, drawn from the partial information $d_i$ they have. $\mathbf{d}$ is the agents' \textbf{joint information state} $\bigcup_{i=1}^N d_i$ and each $d_i$ is drawn from partition $\mathcal{I}_i$ of $\mathcal{W}$. \textbf{Collaborative epistemic asymmetry (CEA)} can be parameterized by $\langle\mathcal{W},\mathcal{U},G,\mathbf{d},\{P_i\}\rangle$. An utterance is \textbf{actionable} if it is (1) {\it informative} \citep{shannon1948mathematical} and (2) {\it goal-oriented} \citep{van2004utility}. This stipulative definition captures an intuitive notion of an actionable utterance in a collaborative scenario: an utterance enables progress toward a shared goal if and only if it both updates the listener's model of the world \textit{and} reduces their uncertainty about which actions are goal-directed. In this setting, \cite{grice1975logic}'s 4 cooperative maxims can be considered constraints on utterances. An utterance $u$ fully is \textbf{cooperative} if it satisfies all 4 constraints. 

\begin{lemma}[Actionability in CEA Settings Requires Gricean Cooperativity]\label{lemma:cooperativity}
    An utterance $u \in \mathcal{U}$ is \textbf{actionable} for listener $L$ if it (1) results in a non-trivial belief update to $L$'s posterior over world states (i.e., $u$ is informative about $w$) \textit{and} (2) reduces $L$'s uncertainty about the goal $G$:
    \begin{align}
        P_L(w|u) \not\propto P_L(w) \wedge H(P_L(G|u)) < H(P_L(G))\text{, where $H(\cdot)$ denotes Shannon entropy.}
    \end{align}
    Under CEA, if utterance $u$ is actionable, then $u$ adheres to Grice's cooperative maxims. Flouting any single maxim is sufficient to degrade actionability, and necessary for certain failure modes.
\end{lemma}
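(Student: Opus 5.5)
The argument is a contrapositive, one maxim at a time. To each Gricean maxim I attach a formal condition on $u$ relative to the CEA tuple $\langle\mathcal{W},\mathcal{U},G,\mathbf{d},\{P_i\}\rangle$:
\begin{itemize}
\item \emph{Quality}: the content of $u$ is consistent with the speaker's information cell $d_S$, so it does not exclude the true world.
\item \emph{Quantity}: $u$ conveys enough of $d_S$ to shrink $L$'s posterior over $G$, and nothing beyond that.
\item \emph{Relation}: the partition of $\mathcal{W}$ that $u$ induces is not independent of $G$.
\item \emph{Manner}: $L$ can decode $u$ into a unique cell of the partition, so $u$ is not ambiguous.
\end{itemize}
I will then show that violating any one of these conditions breaks at least one conjunct of the actionability condition, either $P_L(w|u)\not\propto P_L(w)$ or $H(P_L(G|u))<H(P_L(G))$. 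By contraposition, an actionable $u$ satisfies all four maxims.

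The steps, in order, are as follows.

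First, Quantity (too little). If $u$ carries no information from $d_S$ beyond what $L$ already holds in $d_L$, then the likelihood $P_L(u|w)$ is constant in $w$. Bayes' rule then gives $P_L(w|u)\propto P_L(w)$, so conjunct (1) fails.

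Second, Relation. If $u$ is informative about $w$ but $u\perp G$ under $P_L$, then $P_L(G|u)=P_L(G)$ and the entropies are equal. Conjunct (2) fails, which shows that informativeness alone does not imply actionability.

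Third, Manner. If $u$ maps to several cells, the effective update is a mixture over those readings. By concavity of $H$, the mixture weakly dominates the unambiguous update. In the extreme case of uniform ambiguity over readings whose goal-posteriors average back to the prior, the entropy does not drop, so (2) fails.

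Fourth, Quality. If $u$ asserts something false with respect to $d_S$, Bayesian conditioning can drive $P_L$ onto worlds that exclude the true $G$. Entropy may fall, but the mass on the true goal falls too. Here I will have to read ``reduces uncertainty about $G$'' as reduction with respect to the true goal, for example through cross-entropy to the truth.

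The same constructions give the ``sufficient to degrade'' clause. The ``necessary for certain failure modes'' clause I will handle by exhibiting, for each maxim, a failure mode that only that maxim's violation produces. No fully cooperative utterance yields a constant likelihood, an independence from $G$, or a posterior that excludes the truth, so each mode is witnessed only by the corresponding violation.

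The main obstacle is the Quality case, together with Quantity's ``not more than required'' half. Shannon entropy is indifferent to truth, and extra information never hurts a perfect Bayesian. So the bare entropy condition does not literally entail these two maxims. My first remedy is to make the stated uncertainty reduction relative to the true $G$. For over-informativeness, I would add a bounded-attention or cost term for $L$, so that surplus content dilutes the goal-relevant update. Failing that, I would prove the lemma in the weaker form ``actionability requires Relation, Manner, and the lower bound of Quantity'', and treat Quality and Quantity-upper as conditions that actionability degrades under rather than strictly entails.
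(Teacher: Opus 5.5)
There is a genuine gap in your Quality case. Your per-maxim contrapositive is the same skeleton the paper uses, and your Relation step and the mixture idea for Manner match it. For Quality, though, you conclude that the stated conditions cannot capture it. You then propose either redefining condition (2) relative to the true goal or demoting Quality to a soft condition. Either way you are no longer proving the lemma as stated.

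The missing idea is how the paper models a Quality flout. It is not a particular false proposition that $L$ conditions on. Instead, $u$ is generated by some policy other than $P_i(u\mid w,d_i,\mathcal{I}_i)$, one that is uncoupled from $d_i$. Since $d_i$ is a cell of a partition of $\mathcal{W}$ and is the speaker's only private access to $w$, removing the dependence on $d_i$ removes the dependence on $w$. The likelihood is then constant in $w$, Bayes' rule gives $P_L(w\mid u)\propto P_L(w)$, and condition (1) fails.

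This is exactly the constant-likelihood collapse you already use for vacuous under-informativeness. The paper runs it for Quality and then reduces severe under-informativeness \emph{to} Quality. Your objection that a false assertion can still sharpen $L$'s posterior holds for a listener whose likelihood presumes a cooperative speaker. The paper's argument implicitly evaluates $P_L(u\mid w)$ under the speaker's actual policy. You should state that assumption explicitly, but with it no cross-entropy redefinition is needed.

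On Quantity, the paper shares your obstacle and openly concedes it. It does not claim that flouting Quantity destroys actionability, only that $H(P_L(G\mid u))<H(P_L(G))$ is no longer guaranteed, so Quantity is treated as a reliability condition. Severe over-informativeness is reduced to the Manner mixture argument, since the listener must marginalize over which parts of $u$ are signal; no attention-cost term is needed. So the right fallback keeps Quality, Relation and Manner as hard constraints and relaxes Quantity, not Quality.

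Separately, your Manner step only shows failure of (2) in the degenerate case where the readings' goal-posteriors average back to the prior. Your phrase \emph{the mixture weakly dominates the unambiguous update} also overstates concavity. Concavity only gives $H(P_L(w\mid u))\ge\sum_k P_L(u_k\mid u)\,H(P_L(w\mid u_k))$, a bound by the weighted average over readings, not by the intended reading or by the prior. The paper draws its general Manner conclusion from this same inequality, so a general claim there needs an additional hypothesis about how the readings conflict.
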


\begin{tcolorbox}[title=Proof Sketch]
\hspace{4mm} \textit{\textbf{Quality} Flouting}: if $u$ does not reflect speaker $S_i$'s $d_i$, $P(u|w)$ loses its dependence on $w$ and the posterior collapses to the prior, violating the first condition.

\hspace{4mm} \textit{\textbf{Quantity} Flouting}: if $u$ is under- or overinformative, entropy reduction toward $G$ is not guaranteed, violating the second condition and in certain cases, the first.

\hspace{4mm} \textit{\textbf{Relation} Flouting}: if $u$ is irrelevant to $G$, no posterior update reduces uncertainty toward $G$, violating the second condition.

\hspace{4mm} \textit{\textbf{Manner} Flouting}: if $u$ has multiple ambiguous interpretations, entropy toward $G$ necessarily increases, violating the second condition.

\hspace{4mm} Thus under collaborative epistemic asymmetry, Gricean maxims are necessary conditions for resolving the asymmetry. Any maxim flouting violates one of the two conditions for actionability as defined. A full proof is presented in Appendix~\ref{app:proofs}.
\end{tcolorbox}

This is demonstrated in an example from our experimental setting (Fig.~\ref{fig:dpip-ex3}). Here, each Director (D) utterance updates the posterior of the Builder (B) toward a refinement of which color block should go where (satifying condition 1). The set of possible moves B should make to build the structure is also narrowed (satisfying condition 2). B \textit{acts} when both conditions are satisfied: he removes a block from the board in anticipation of correcting the block beneath it.  The CEA setting produces the pragmatic behaviors predicted by Lemma~\ref{lemma:cooperativity}. See Appendix~\ref{app:rsa} for an extended analysis based on the RSA framework.

\begin{figure}
  \centering
  \begin{minipage}{0.45\linewidth}
    \centering
    \includegraphics[width=\linewidth,clip,trim={120 120 200 100}]{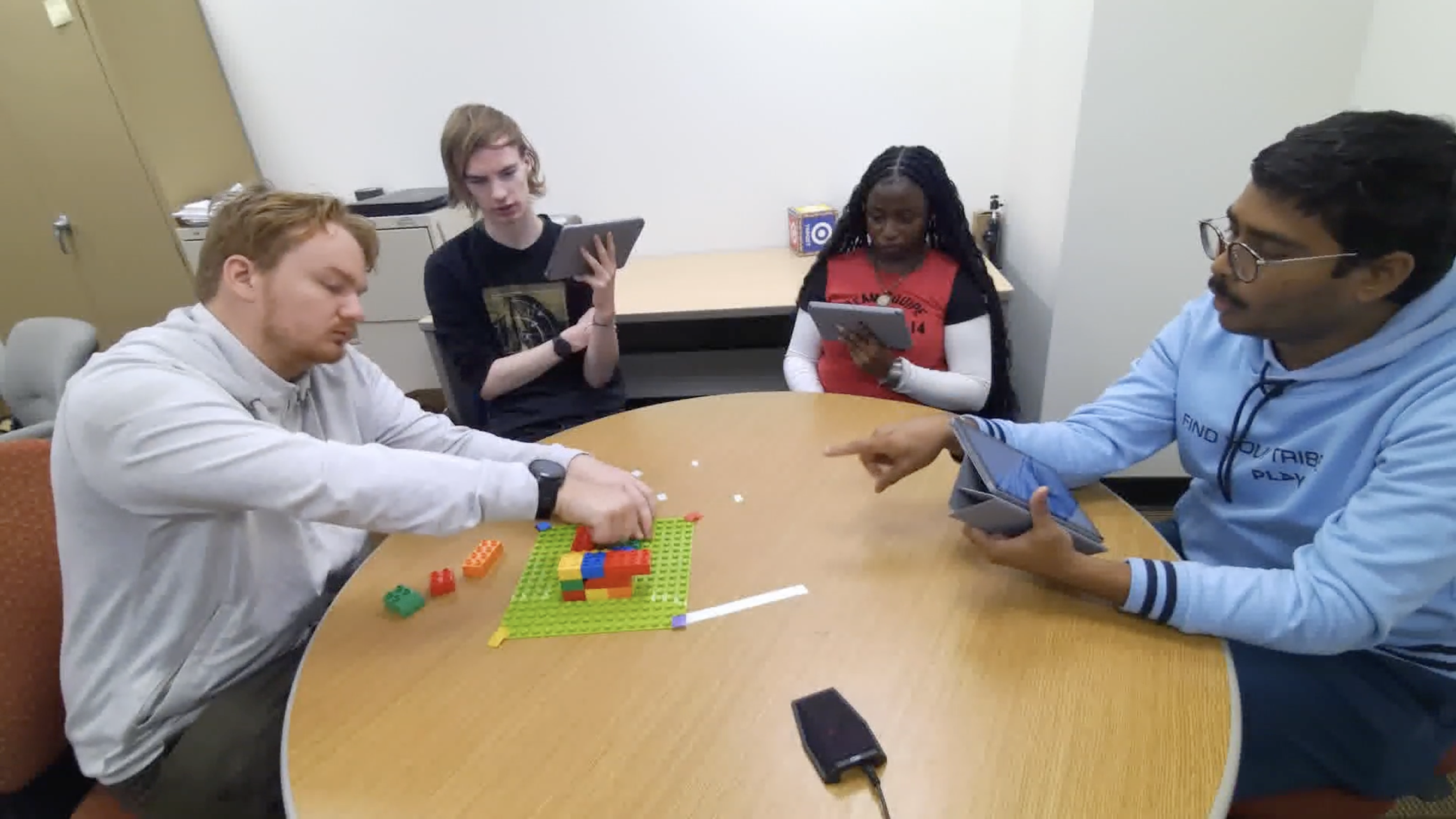}
  \end{minipage}
  \hfill 
  \begin{minipage}{0.5\linewidth}
    \centering
    \small
    \begin{tabular}{l l}
      \toprule
      \textbf{Agent} & \textbf{Utterance / Action} \\
      \midrule
      D1 & The long. \\
      B & This is yellow. \\
      D1 & Yeah, that one's yellow. \\
      B & So this is yellow. \\
      B & Which one's this color? \\
      B & Green? \\
      D3 & Yeah, the bottom one is green. \\
      B & \textbf{Action: REMOVE $\texttt{rs}$ from $layer~1$} \\
      \bottomrule
    \end{tabular}
  \end{minipage}
\vspace*{-2mm}
  \caption{Still from Distributed Partial Information Puzzle (DPIP) Lego task dataset \citep{zhu2026distributed} with accompanying dialogue snippet at the point of execution. Three {\it Directors} receive separate side views of a single structure made of large Lego blocks, and have to instruct one {\it Builder} to place the correct blocks in the correct places on a board to build it.}
  \label{fig:dpip-ex3}
\vspace*{-2mm}
\end{figure}

Connecting cooperativity to actionability this way enables measurement of cooperativity in terms of task progress, providing an objective metric that is explicitly connected to satisfying the Gricean maxims. We can therefore quantitatively assess when LLMs are appropriately Gricean by assessing when their utterances contribute toward shared task progress.


\vspace*{-3mm}
\section{Methodology}
\label{sec:method}
\vspace*{-2mm}


We perform our examination in a setting derived from the Distributed Partial Information Puzzle (DPIP) Lego task \citep{zhu2025multimodal,zhu2026distributed} (Fig.~\ref{fig:dpip-ex3}). The DPIP Lego task is both a ``jigsaw'' task \citep{hattie2012visible} and fundamentally a \textit{reference game} \citep{frank2012predicting} that requires the Builder to select \textit{both} objects of specified color and size from a fixed inventory, and {\it locations} at which to place them. All goal structures have a $3\times 3$ footprint (1 unit = 1 square Lego block) and are 3 layers high. The Directors cannot show their privately-held views to anyone else, and only the Builder can manipulate blocks and the board. All participants may see the board. This distinction in roles allows us to map the {\it Speakers} as defined in Sec.~\ref{sec:theory} to the Directors, and the {\it Listener} to the Builder. We can thus meaningfully speak of ``pragmatic'' or ``literal'' Builders who interpret Director utterances accordingly, or ``optimal'' Directors who attempt to maximize cooperativity in their utterances.

\cite{zhu2026distributed} present 10 annotated videos of humans performing the task, including speech transcriptions and action annotations. They show that this task presents defined challenges for SOTA LLMs in inference tasks like structure and participant belief prediction. However, the static dataset makes it challenging to assess counterfactual outcomes, such as {\it what if the participants spoke more/less cooperatively?}\footnote{This is a known challenge of fixed datasets in the domain of human-LLM and LLM-LLM interaction, as noted by \cite{nath2025frictional}.}

\vspace*{-3mm}
\subsection{Task Environment}
\label{ssec:task-env}
\vspace*{-2mm}

Inspired by the richness of the DPIP Lego task but cognizant of the challenges posed by its relatively sparse naturalistic data and fixed nature of the dataset, we adopt the associated CRAFT simulator and benchmark platform \citep{nath2026craft}.\footnote{\url{https://zenodo.org/records/18626419}}
\textsc{CRAFT} is a multi-agent coordination benchmark which tests LLMs for grounded communication under partial observability, built upon a practical implementation of the Bounded Pragmatic Speaker (BPS) formalism \citep{nguyen2023language}. \textsc{CRAFT} simulates the DPIP Lego task---including 3D “Lego” structure generation, conversion to Director-specific 2D partial views, and the Builder's move execution and validation logic---using a \textit{text-only} agentic framework that supports both open-weight and proprietary models in turn-based synchronous communication protocols~\citep{li2023camel,nath2025collaborate,nath2025let} for creating high-quality “expert” trajectories. 

LLMs can be assigned the role of a Director (with privately held information) or the Builder. All agents have access to the current board state, or the public portion of the world state $w$. 
Each turn, 3 Director instructions are generated in the context of the current board state, their private information, and the current dialogue history. 
Due to the text-based nature of CRAFT, Director utterances eschew demonstratives to indicate location in favor of specific descriptions using directional and relational terms. 
Directors are randomly sampled with replacement. The Builder chooses one instruction to follow. CRAFT provides a {\bf move exploration tool} the Builder can use to assess move options before executing. This separately interprets each Director instruction and returns its estimated impact toward goal completion. This is in effect a computational approximation of what the pragmatic listener $L_1$ should do. The Builder then executes the instruction of the most positive value.


The Builder can {\tt PLACE} or {\tt REMOVE} one block of specified color and size at a specified location, which updates the board state. It also generates a {\it confirmation} of its move in plain English. The move may \textbf{fail} if Builder attempts something illegal (e.g., to place a block somewhere without a supporting block underneath it). The Builder may also request to {\tt CLARIFY} the instructions without executing. Moves (including fail and clarify outcomes) are appended to the dialogue history, which continues for a prespecified length.  Further technical details on CRAFT are in \cite{nath2026craft}. Further details on our specific usage are in Appendix~\ref{app:craft}.

Each ``game'' consists of a pre-generated goal structure $G$, which is partitioned to populate the Directors' private information $(d_1, d_2, d_3)$. One of 6 partial completion statuses is randomly pre-assigned, such that at the beginning of the game (start state $w \in \mathcal{W}$), the board can be {\it empty}, or the structure can have its {\it first 1 or 2 layers}, or a Director's wall (\textit{D1}, \textit{D2}, \textit{D3}) pre-completed consistent with the goal state. Directors are assigned personality archetypes to increase lexical diversity in their utterances, which are specified using fixed roleplay prompts (Appendix~\ref{app:prompts}). These form a prior over $\mathcal{U}$, which is sampled from to create the dialogue history.  Thus a game maps neatly to the CEA specification (Sec.~\ref{sec:theory}). Lemma~\ref{lemma:cooperativity} predicts that {\it failed moves} are necessary consequences of non-actionable instructions, and flouting of Gricean maxims should adversely impact the correctness of valid instructions.

\vspace*{-3mm}
\subsection{Data Generation}
\label{ssec:generation}
\vspace*{-2mm}
Leveraging \textsc{CRAFT} and 100 pre-generated goal structures, we generated ``gold standard" games for sampling preference data. For each structure, we ran 2 games of $T=20$ turns. Separate GPT-4.1-mini instances roleplayed the Builder and the Directors.\footnote{GPT-4.1-mini was selected after initial experimentation that showed it to optimally balance lexical diversity, overall task progress in 20 turns, and cost. Details are given in Appendix~\ref{app:model-selection}.} The Builder agent used the move exploration tool to maximize progress.  From this data, we constructed a preference dataset $\mathcal{D} = \{(x,y_w,y_\ell)\}_{i=1}^N$ where each sample consisted of a dialogue history (context) $x$ terminating before a given turn $t$, the most actionable Director utterance $y_w$ (defined as the utterance in $t$ that the Builder acted upon) and a suboptimal utterance $y_\ell$ from the same turn. This data (6,201 samples) was used to align open-weight models to act as Directors. A further 20 goal structures were held out solely for evaluation (Sec.~\ref{sec:exp}).

\vspace*{-3mm}
\subsection{Metrics}
\label{ssec:metrics}
\vspace*{-2mm}

One of our core insights in this work is that in a CEA setting, a Director utterance being {\it actionable} means it must be cooperative (Sec.~\ref{sec:theory}), but this does not mean that the instruction or Builder action based on it is necessarily {\it correct} w.r.t. the final goal structure. Therefore we calculated a ``correctness score'' (0--6) for each turn based on how the utterances and actions therein furthered task completion and group common ground. Starting from a floor of 0 for a {\bf failed} move (non-actionable instruction), 1 point was assigned if the Builder requested clarification (signaling unactionable but partially informative instructions). A minimum of 2 points was assigned for a successful move, with 1 additional point each for
\begin{enumerate*}[label=\arabic*)]
    \item placing/removing the right block at/from the right position in the goal structure,
    \item placing/removing the right block at/from the right position in the in a Director's private view, 
    \item if all blocks currently in the goal structure were in the correct place\footnote{To avoid unduly penalizing for errors made in previous turns, a discount factor of $\gamma = 0.5$ was applied to this metric. The correctness value was incremented by 1 if this metric was {\tt True}, and if {\tt False}, by $(1 - (\gamma^{p-1}))$ where $p$ is the number of consecutive previous turns where the overall structure remained incorrect due to an existing uncorrected error.}, and
    \item if utterances; in the l turn agree on the move, according to an LLM-Judge (prompt in Appendix~\ref{app:prompts}).
\end{enumerate*}

We also calculated objective metrics of task progress: {\bf IoU} (IoU between blocks on the board and in the goal structure), 
{\bf Completion \%} (percentage of target blocks that are correctly placed), {\bf Position Accuracy} (accuracy of blocks in layers independent of layer order), and {\bf Overall Progress} (mean of the previous 3).\footnote{Formulas are given in Appendix~\ref{subsec:metrics}.} As some games started with a non-empty board, we normalized metrics relative to their values at $t=0$ and report deltas.









\vspace*{-3mm}
\section{Experiments}
\label{sec:exp}
\vspace*{-2mm}

We tested \textit{closed models} (\textbf{GPT-4.1-mini}, \textbf{GPT-4o-mini}) acting as the Builder, while closed or \textit{open-weight models} (\textbf{Qwen2.5-7B-Instruct}, \textbf{Llama 3.1-8B-Instruct}) could be Directors.\footnote{Full prompts are in Appendix~\ref{app:prompts} and training hyperparameters are in Appendix~\ref{app:hyperparams}.} Goal structures were randomly pre-split among the 6 possible start states except where noted. Each game was played with pre-assigned Director archetypes, for $T=20$ turns, to keep conditions consistent.

\vspace*{-2mm}
\paragraph{Prompting Experiments}
This set of experiments assessed the effect of {\bf prompting} the Builder LLM (a closed model) to act as a {\it literal} listener ($L_0$) or a {\it pragmatic} listener ($L_1$) as in the RSA framework. These states were invoked through simple prompt ``infixes'' following prior works \citep{ward2023honesty,nath2025learning}. The prompt infix used for the ``Literal Builder'' stated ``{\it Read each director's message at face value}'', while the ``Pragmatic Builder'' was prompted to identify which Director's utterance was the most informative and to think through a set of questions adapted from \cite{chenail2014}, relating to how well the information presented aligns with Gricean maxims. These conditions were compared to a baseline condition where no explicit instructions on how to interpret Director utterances were provided, aside from the default system prompt from CRAFT. Each of the 20 test structures was run twice. This addressed {\bf RQ1} and {\bf RQ2}.

\vspace*{-2mm}
\paragraph{Tool-Calling Experiments}
We further assessed outcomes where the Builder acted with the \textbf{move exploration tool} vs. without. This tested how Builder exploration before commitment improved selection of actionable utterances and therefore outcomes, addressing {\bf RQ2}.

\vspace*{-2mm}
\paragraph{Post-Training Experiments}
In this set of experiments, open-weight models acting as Directors were \textbf{post-trained} using SFT, DPO \citep{rafailov2024direct} and IPO \citep{azar2024general} methods against the preference dataset $\mathcal{D}$ (Sec.~\ref{ssec:generation}). Intuitively, if the Builder were behaving more like a literal listener ($L_0$), which could impact task progress, then the Directors could be made more robust to this by aligning their utterances toward more actionable examples as captured in the preference dataset. GPT-4.1-mini was chosen for the Builder (see Appendix~\ref{app:model-selection} for more). Each test structure was run \textbf{once} in all these experiments, which addressed {\bf RQ3}. The main experiments we report here start from an empty board state.

\vspace*{-3mm}
\section{Results}
\label{sec:results}
\vspace*{-2mm}

\begin{table}[h!]
\vspace*{-2mm}
  \centering
  \resizebox{.98\textwidth}{!}{%
  \begin{tabular}{l cccc cccc}
    \toprule
    & \multicolumn{4}{c}{\textbf{Builder: GPT-4.1-mini}} & \multicolumn{4}{c}{\textbf{Builder: GPT-4o-mini}} \\
    \cmidrule(lr){2-5} \cmidrule(lr){6-9}
    \textbf{Prompt} & Progress $\Delta$ & $p$ & Compl. \% $\Delta$ & $p$ & Progress $\Delta$ & $p$ & Compl. \% $\Delta$ & $p$\\
    \midrule
    Base w/ tool  & $0.014_{\pm 0.015}$ & --- & $5.8_{\pm 1.6}$ & --- & $\mathbf{0.009_{\pm 0.017}}$ & --- & $\mathbf{4.7_{\pm 1.2}}$ & --- \\
    Base w/o tool        & $-0.001_{\pm 0.010}$ & --- & $1.4_{\pm 1.3}$ & --- & $-0.061_{\pm 0.028}$ & --- & $2.1_{\pm 0.5}$ & --- \\ 
    Literal w/ tool  & $0.027_{\pm 0.014}$ & 0.111 & $6.4_{\pm 1.4}$ & 0.538 & $-0.011_{\pm 0.022}$ & 0.597 & $\mathbf{4.7_{\pm 1.1}}$ & 0.756 \\
    Literal w/o tool     & $-0.004_{\pm 0.016}$ & 0.667 & $2.3_{\pm 1.5}$ & 0.476 & $-0.066_{\pm 0.027}$ & 0.185 & $0.9_{\pm 0.5}$ & 0.087 \\
    Pragmatic w/ tool & $\mathbf{0.050_{\pm 0.013}}$ & \underline{$<0.001$} & $\mathbf{8.7_{\pm 1.6}}$ & \underline{$0.013$} & $0.007_{\pm 0.016}$ & 0.704 & $4.0_{\pm 0.9}$ & 0.370 \\
    Pragmatic w/o tool    & $-0.020_{\pm 0.016}$ & 0.381 & $-0.1_{\pm 1.5}$ & 0.151 & $-0.043_{\pm 0.026}$ & 0.478 & $2.6_{\pm 0.6}$ & 0.723 \\
    \bottomrule
  \end{tabular}}
\vspace*{-2mm}
    \caption{Comparison (Mean$_{\pm \text{SEM}}$) of Overall Progress and Completion \% $\Delta$ across Builders (Directors: GPT-4.1-mini, $N=40$). $p$ values compare to analogous base prompt condition given a paired $t$-test. Significant values underlined at threshold of $p=0.05$.}
  \label{tab:builder_comparison_combined}
\vspace*{-2mm}
\end{table}

Table~\ref{tab:builder_comparison_combined} shows progress compared to game start under various prompting/tool-calling conditions using GPT-4.1-mini and GPT-4o-mini. The first thing we see is that GPT-4.1-mini substantially outperformed GPT-4o-mini, so we focus on this Builder agent for the remainder of this paper. Using the move exploration tool helps performance under all prompting conditions, suggesting that allowing the Builder to explore more interpretations of Director utterances helps it more optimally reason about Directors' own interpretations of $u$ and the intended action. Eliciting pragmatic reasoning from the GPT-4.1 Builder with the {\it Pragmatic Builder} prompt infix results in the best overall performance with use of the exploration tool, attaining statistical significance. This combination is also the most reliable Builder, with the best ``worst case'' outcomes (-0.102 Overall Progress $\Delta$ and -0.120 Completion \% $\Delta$). However, the Pragmatic Builder without tool-calling is as bad as or worse than the Literal Builder. This aggregate evidence suggests that in this CEA setting, OTS LLMs show no significant difference in behavior from a \textit{literal listener} ({\bf RQ1}), but can be made to behave more like pragmatic listeners with a {\it combination} of appropriate prompting and exploration ({\bf RQ2}). The combination is important: the prompt tells the model to reason pragmatically, the tool gives it the machinery to do so.

\begin{figure}[h!]
\vspace*{-2mm}
    \centering
    \includegraphics[width=.95\linewidth]{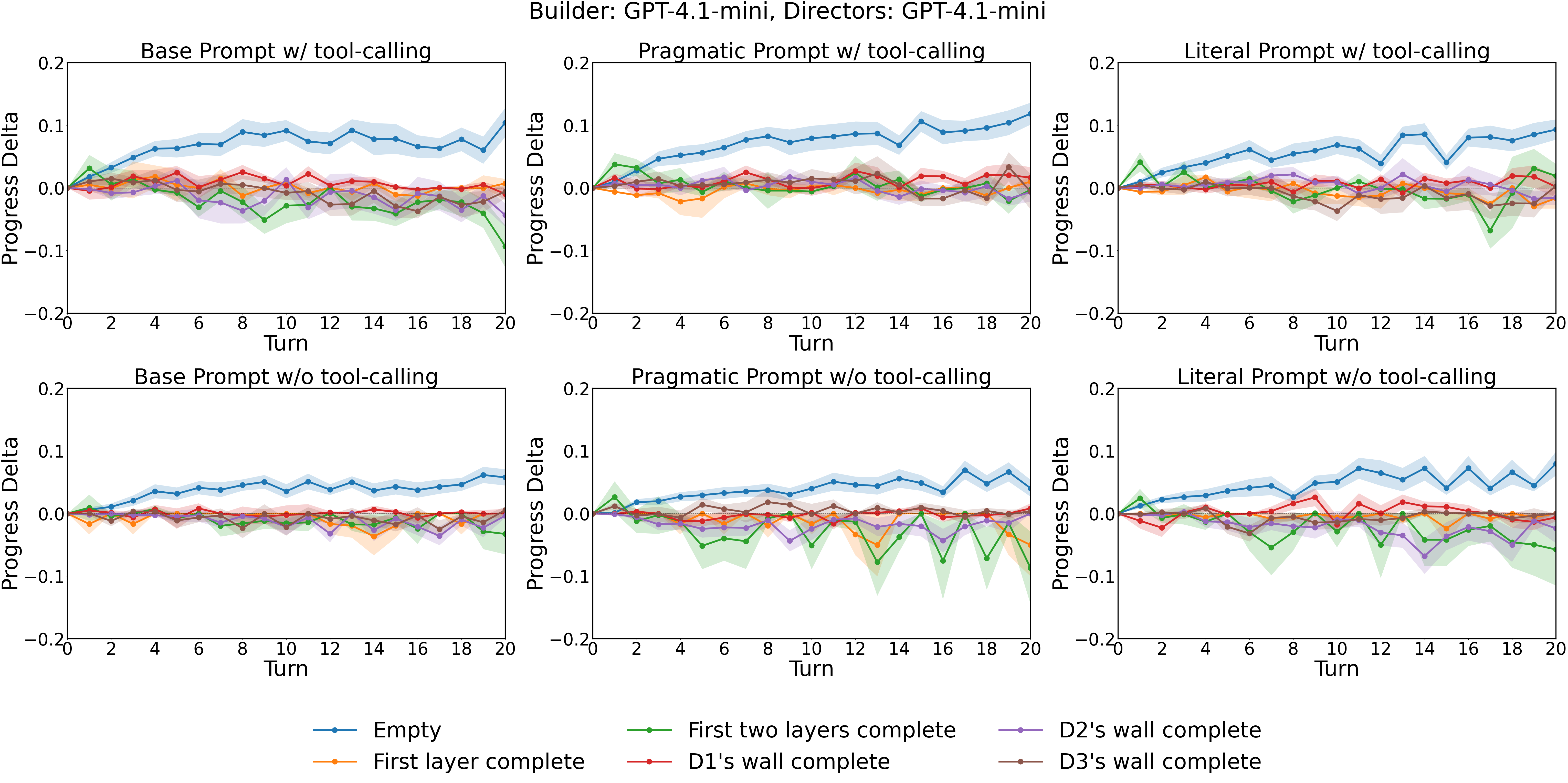}
\vspace*{-2mm}
    \caption{Main prompting and tool-calling experimental results with GPT-4.1-mini instances as Directors and Builder. Shading reflects std. error of the mean.}
    \label{fig:main_prompting_results}
\vspace*{-2mm}
\end{figure}

Fig.~\ref{fig:main_prompting_results} breaks down Overall Progress $\Delta$ over turns and by start state. We see the same improvement with move exploration and the Pragmatic Builder. The strongest effect is in games starting with an empty board, which show steady progress toward the goal helped by pragmatic prompting and exploration. In other cases the group makes almost no progress or even regresses by undoing some of the pre-completed structure, even though it was guaranteed to be correct. We discuss this further in Sec.~\ref{sec:disc}.


\begin{figure}[h!]
\vspace*{-2mm}
    \centering
    \includegraphics[width=.95\linewidth]{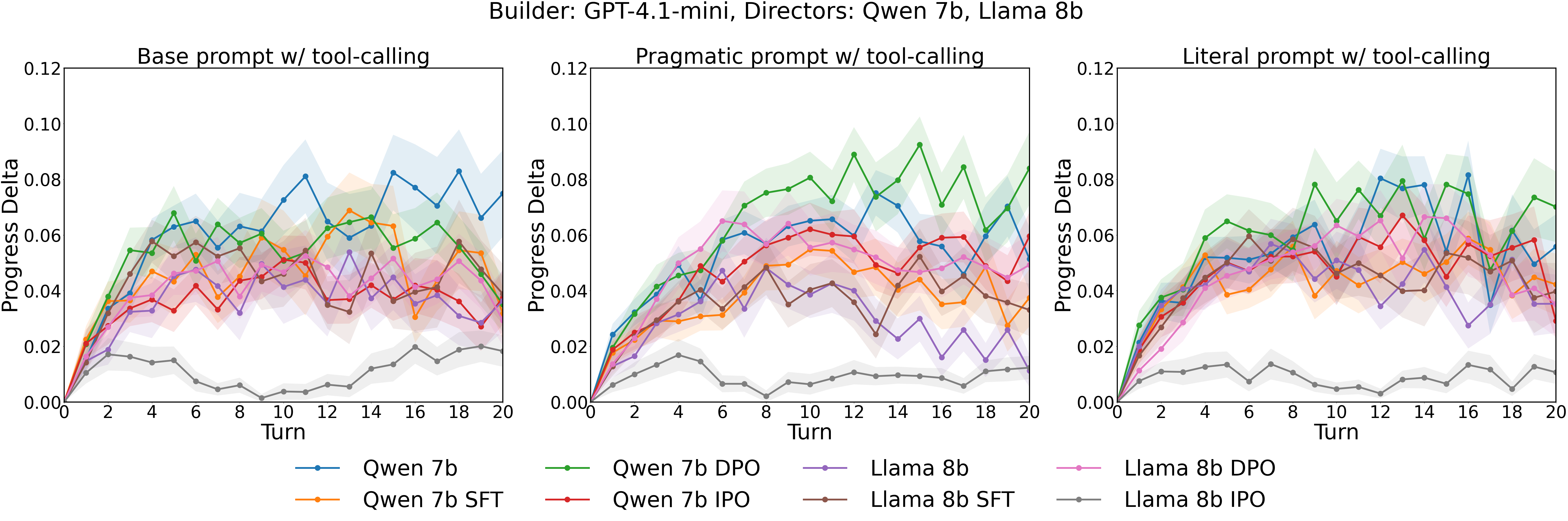}
\vspace*{-2mm}
    \caption{Effects of Director post-training by prompting condition with move exploration.}
    \label{fig:posttraining}
\vspace*{-2mm}
\end{figure}

Fig.~\ref{fig:posttraining} shows post-trained open models (Qwen and Llama) as Directors in games starting from an empty board, by Builder prompt condition. Supervised fine-tuning of the Director model hurts performance relative to the base model---``mean-seeking'' \citep{chan2022greedification} SFT may produce ``safe'' actionable-sounding utterances decoupled from actual task state. However, DPO alignment brings it back up when the Builder is explicitly Literal or, in particular, Pragmatic. This shows the convergence between more optimal speakers and pragmatic listeners predicted by RSA, alignment toward optimal utterances may provide robustness against listener suboptimalities ({\bf RQ3}), with Qwen as a stronger Director overall than Llama. Llama with IPO in particular seems to repeatedly place and remove the same few blocks, creating a correction spiral that inhibits progress. IPO's stronger KL-regularization may keep it closer to the underperforming SFT reference model, creating a feedback loop. The relatively small preference dataset $\mathcal{D}$ may also be a factor here.

\vspace*{-3mm}
\section{Discussion: Where Do Pragmatic Inference Failures Occur?}
\label{sec:disc}
\vspace*{-2mm}

Results show broad trends demonstrating how prompting, exploration, and post-training elicit pragmatic speaking and listening behaviors in LLM collaborators under epistemic asymmetry. However, the overall ceiling on pragmatic capabilities appears to remain low (\textbf{22.96\%} outright move failure rate with move exploration). Where, then, do pragmatic inference failures come from? We examine specific cases from the test data where Gricean maxim flouting as operationally defined correlated with non-actionable instructions.

\begin{figure}[h!]
\vspace*{-2mm}
    \centering
    \begin{subfigure}[b]{0.48\textwidth}
        \centering
        \includegraphics[height=1.75in]{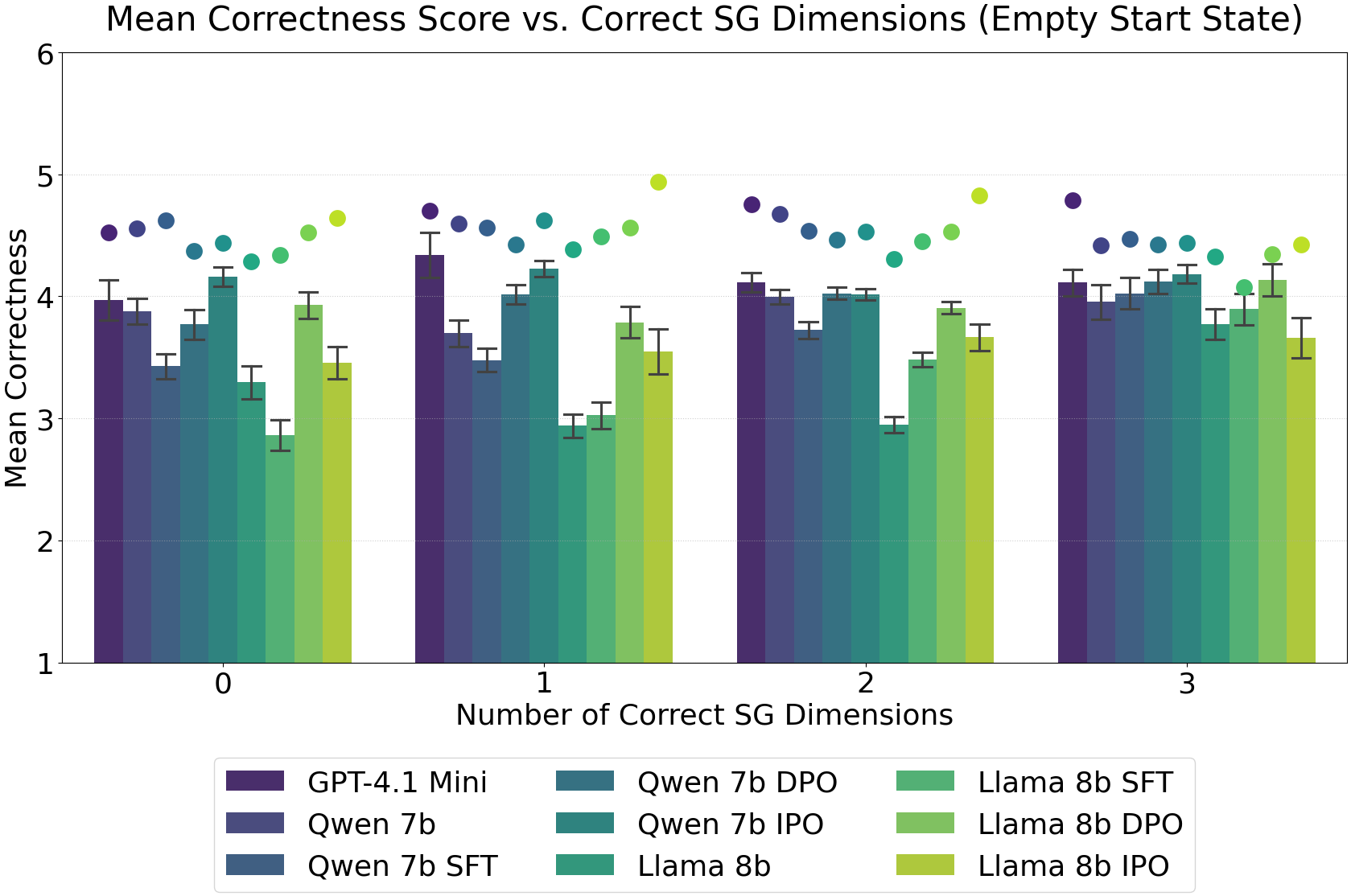}
        \subcaption{}\label{fig:violations-quality}
    \end{subfigure}
    \hfill
    \begin{subfigure}[b]{0.48\textwidth}
        \centering
        \includegraphics[height=1.75in]{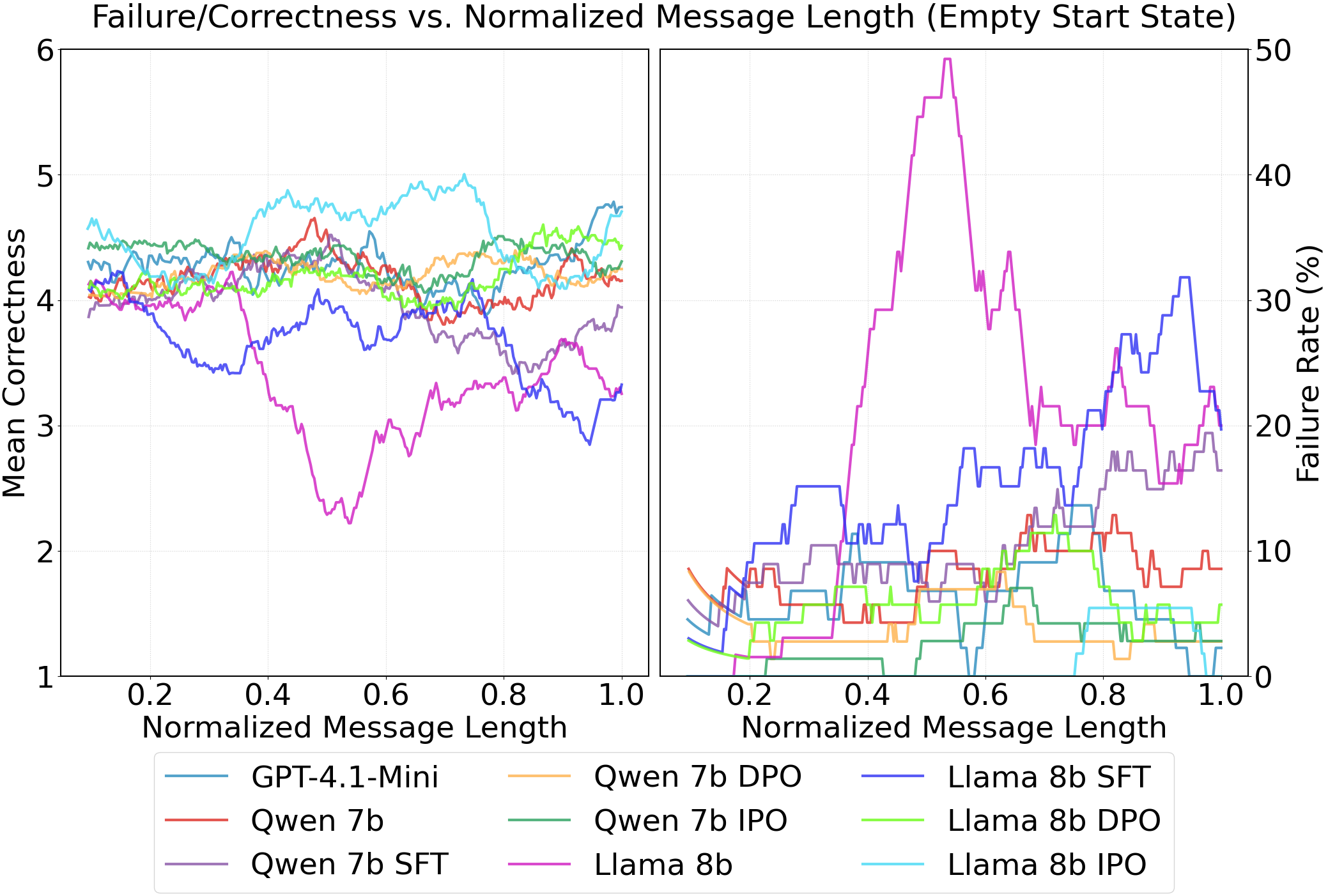}
        \subcaption{}\label{fig:violations-quantity}
    \end{subfigure}
    \\
    \begin{subfigure}[b]{0.48\textwidth}
        \centering
        \includegraphics[height=1.75in]{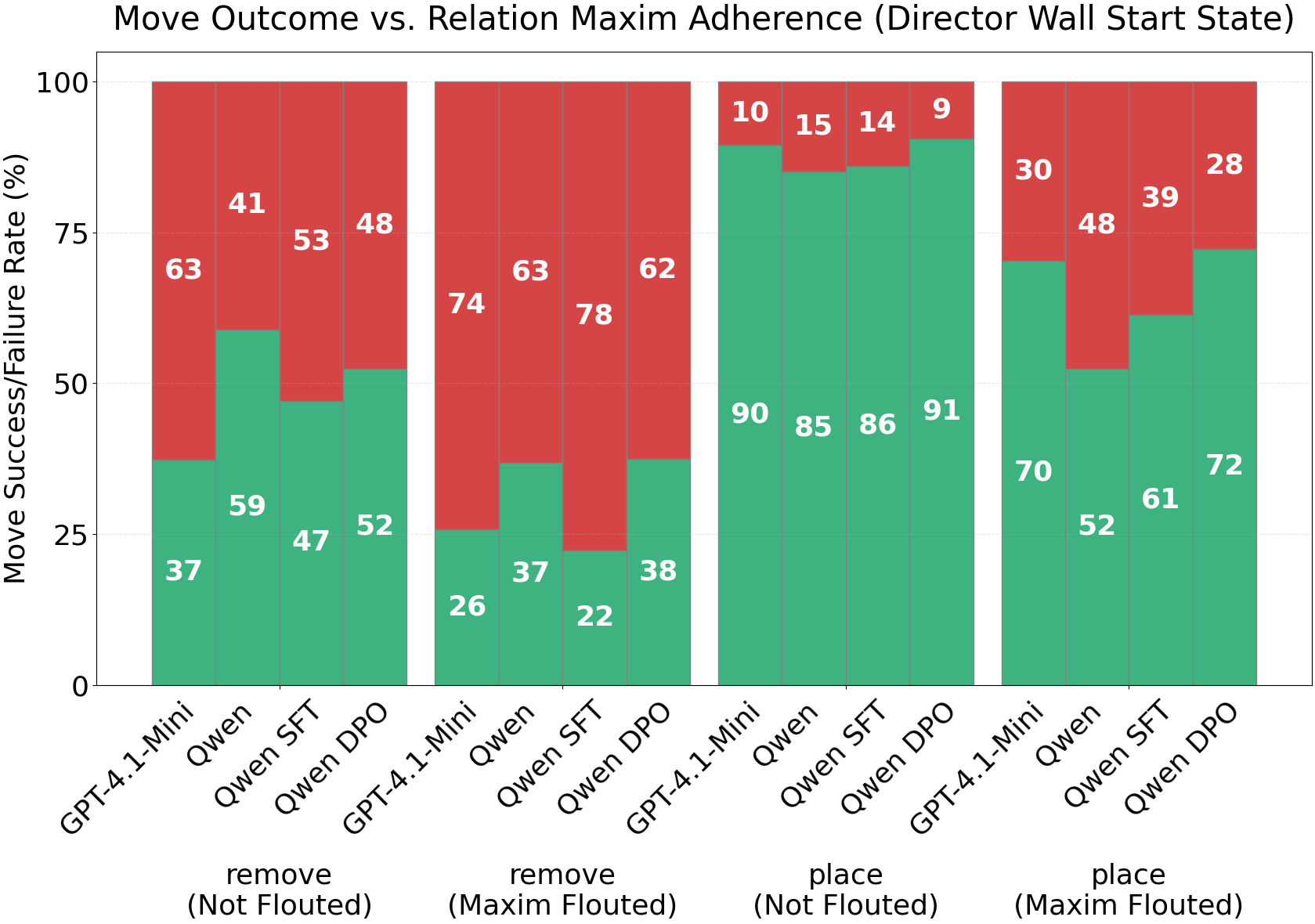}
        \subcaption{}\label{fig:violations-relation}
    \end{subfigure}
    \hfill
    \begin{subfigure}[b]{0.48\textwidth}
        \centering
        \includegraphics[height=1.75in]{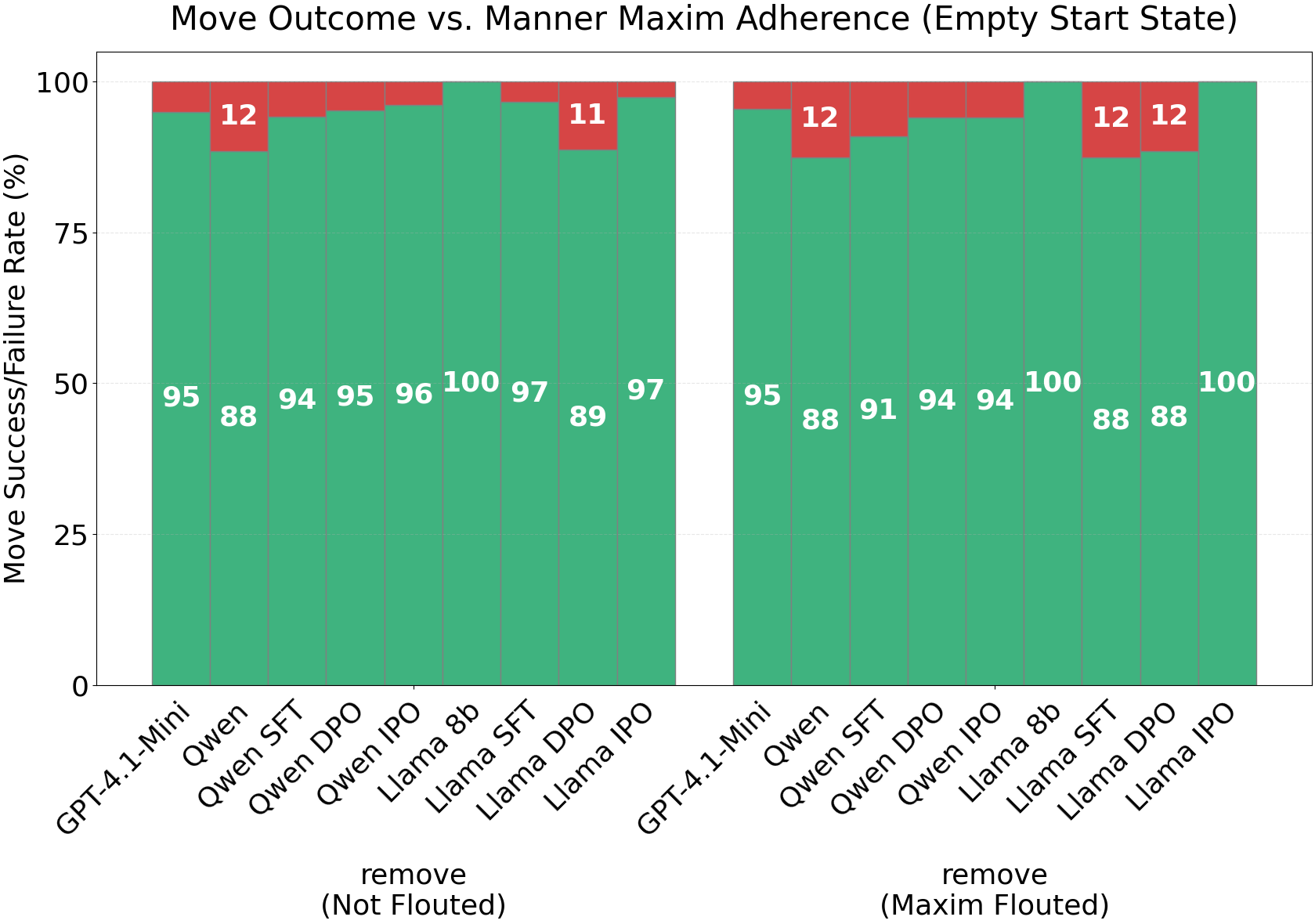}
        \subcaption{}\label{fig:violations-manner}
    \end{subfigure}
\vspace*{-2mm}
    \caption{\label{fig:violations}(\subref{fig:violations-quality}) Mean executed move correctness vs. correct grounding dimensions in Director utterances. \textbf{Circles} indicate the mean correctness of \textit{non-failed moves only}. (\subref{fig:violations-quantity}) Move failure rate/mean correctness vs. normalized message length by Director model. (\subref{fig:violations-relation}) {\tt PLACE} and {\tt REMOVE} outcomes vs. maxim of relation adherence. (\subref{fig:violations-manner}) {\tt REMOVE} outcomes vs. maxim of manner adherence. For failure rate analyses, Builder {\tt CLARIFY} requests are considered failed moves. (Builder: GPT-4.1-mini with Pragmatic prompting and move exploration).}
\vspace*{-3mm}
\end{figure}

\paragraph{Quality Flouting} If a director does not accurately describe their wall view, the maxim of quality is flouted. An LLM-Judge (GPT-4.1-mini, the same model used as the Builder in most experiments) examined Director utterances for accurate representation of blocks in their wall view in terms of color, size, and layer position, and assigned 1 point for each dimension correct. Fig.~\ref{fig:violations-quality} shows mean move correctness vs. number of accurate spatial grounding (SG) dimensions in Director utterances. We see that Director alignment correlates with improved actionable quality of utterances over SFT models. Certain models like Qwen IPO or Llama DPO can provide actionable instructions that result in low move failure rates, although the moves may not be correct.




\vspace*{-2mm}
\paragraph{Quantity Flouting} Fig.~\ref{fig:violations-quantity} shows move failure rate and correctness score between 0 and 6 as a function of normalized Director utterance length (in words) using a rolling average with a window of 20\% of total samples. Particularly with open models, move failure rate trends upward and correctness declines as Director message length grows. This shows sensitivity to flouting the maxim of quantity, and that overinformativeness (as proxied by length) adversely affects actionability as Lemma~\ref{lemma:cooperativity} predicts. Aligned Llama appears less sensitive to this, with its low move failure rate also reflected here, but at the cost of correctness.

\vspace*{-2mm}
\paragraph{Relation Flouting} If a Director's wall is fully completed, the wall's ``owner'' should avoid modifying it, as doing so would have no bearing on progress toward goal state $G$ and flout the maxim of relation. Fig.~\ref{fig:violations-relation} shows {\tt PLACE} and {\tt REMOVE} outcomes (fail/succeed) in cases where a Director's wall is complete but the ``owner'' gives an instruction anyway. The Builder may {\it ignore} this maxim flouting and act upon it, or act upon utterances from a different Director (which adheres to the maxim). Flouting the relation maxim strongly correlates with move failure: the ``wall owner'' frequently pushes for changes to a complete wall, which are not actionable and fail. This explains in part why games that started with non-empty boards showed inferior overall progress (Sec.~\ref{sec:results}, Fig.~\ref{fig:main_prompting_results}).\footnote{To save limited compute resources, Llama and IPO directors were not run in start conditions beginning with fully completed walls, which was required to assess this definition of relation.}

\vspace*{-2mm}
\paragraph{Manner Flouting} If there are two instructions in a turn from the same Director, which flouts the maxim of manner, it forces the Builder to marginalize over their interpretations and increase uncertainty. However, the Builder may ignore this flouting and act anyway, choosing one of the multiple interpretations or a different Director's utterance entirely. Fig.~\ref{fig:violations-manner} shows the effect of the occurrence of manner flouting on move outcomes. In contrast to flouting other maxims, we see that a strong Builder model is very capable of ignoring manner flouting and making successful moves. Manner flouting shows a very slight effect of increasing failure of {\tt REMOVE}s, particularly in SFT models, compared to cases where there were no duplicate director utterances in a turn (manner adherence).



\vspace*{-3mm}
\section{Conclusion}
\label{sec:conc}
\vspace*{-2mm}

In this paper we performed a novel investigation of LLMs pragmatic speaking and listening capabilities under conditions of epistemic asymmetry. Similar questions have been examined in human-human collaborations and single AI-human collaborations, but never to our knowledge in multiagent collaborations, where the asymmetric nature of information available to different agents creates unique conditions that challenge LLMs' reasoning capabilities. Other recent work (e.g., \cite{eisenstein2026mt}) has identified shortcomings in LLM reasoning in private information contexts; importantly, our work identifies unique challenges in {\it multiagent} settings with shared goal-\textit{directedness} without explicit goal \textit{knowledge}. While we make no specific claims about cooperativity in human-LLM interactions, our results suggest that even in LLM-LLM interactions rendered in human-readable language, LLMs suffer from pragmatic failures in both speaking and listening, but also display some pragmatic listening ability to ignore speakers' flouting of certain maxims. 

We formalized a notion of collaborative epistemic asymmetry (CEA) that explicitly connects the actionability of task-centered utterances to consistency with Grice's maxims of conversation, showing that under certain assumptions, flouting the maxims negatively impacts actionability and thus collaborative task progress. Explicitly connecting Gricean maxims to objective task metrics allowed us to investigate the contributions of prompting, exploration, and alignment to LLMs' cooperativity in collaboration and the challenges this setting poses. 
Empirical evidence bolsters our theoretical formulation: flouting Gricean maxims under epistemic asymmetry is associated with adverse impacts on agent-agent coordination and task performance. Closed model Builder agents are better pragmatic listeners when there is not too much context (such as a partially full board) to incorporate into their reasoning. Open model directors can be aligned to be more optimal in their utterances but they remain fragile; certain models give instructions that are superficially cooperative but have low goal-directness, resulting in trivially correct moves, like placing and removing the same block. Future work may explore novel optimization techniques to enable Builder/Listener models to adaptively seek information from specific speakers or in certain ways, online Director optimization using methods like PPO or GRPO, or intentional Gricean \textit{violations} by introducing perturbed private information or adversarial agents.

\vspace*{-2mm}
\section*{Ethics Statement}
\vspace*{-2mm}
Due to human diversity of expression and the human tendency to behave in ways that challenge even theoretically rigorous and empirically validated AI systems, our results have bearing on human-LLM interactions and suggest that assumptions of current LLMs' pragmatic competence in collaborative settings should be treated cautiously or even skeptically. These conclusions would need to be tested in real multiparty human-AI collaborations in order to assess the precise effects of human diversity on LLM performance, as well as susceptibility to factors such as bias against non-normative modes of communication. Additionally, because of the focus of our work on agent-agent interactions in collaborative tasks with measurable outcomes, ``flouting" as used in the context of our experiments focuses less on social nuance in human communication, such as irony, hyperbole, or the use metaphors to imply meaning, in favor of measuring instances where LLMs break the rules outlined by the maxims of conversation without direct intent to imply additional meaning. These social nuances are often situation- and culture-dependent and so agentic AI settings remain suboptimal settings to study and make claims about them.

\bibliography{colm2026_conference}
\bibliographystyle{colm2026_conference}

\appendix
\section{Proof of Lemma~\ref{lemma:cooperativity}}
\label{app:proofs}
\begin{proof}
Let us first explicitly define the constraints on an utterance $u$ entailed by \cite{grice1975logic}'s cooperative maxims in the collaborative epistemic asymmetry (CEA) setting (see Sec.~\ref{sec:theory}):

\begin{itemize}
    \item {\bf Quality}: speaker $S_i$ should only assert what they believe to true given $d_i$, their own information state.
    \item {\bf Quantity}: speaker $S_i$ should assert no more and no less than is required to communicate elements of $d_i$.
    \item {\bf Relation}: speaker $S_i$ should assert only what is relevant to the shared goal $G$, of which $d_i$ is partial information.
    \item {\bf Manner}: speaker $S_i$ should assert clearly and without ambiguity.
\end{itemize}

We now show that flouting any single maxim violates either
\begin{enumerate*}[label=(\arabic*)]
    \item the condition that $u$ perform a non-trivial belief update to listener $L$'s posterior over world states---$P_L(w|u) \not\propto P_L(w)$)---or
    \item the condition that $u$ must reduce $L$'s uncertainty about the goal $G$---$H(P_L(G|u)) < H(P_L(G))$.
\end{enumerate*}

\begin{enumerate}
    \item {\bf Quality} flouting: If $u$ does not reflect $d_i$, $S_i$'s actual private information, then $u$ was not generated from $P_i(u|w,d_i,\mathcal{I}_i)$, but rather from some other policy distribution uncoupled from $d_i$. Since $d_i$ must be drawn from a partition $\mathcal{I}_i$ of world states $\mathcal{W}$, removing $u$'s dependence on $d_i$ likewise removes its dependence on $w$. Thus, listener $L$'s posterior collapses:
    \begin{align}
        &\notag P_L(w|u) \propto P_L(u|w,d_i,\mathcal{I}_i) \cdot P_L(w) \Rightarrow  \\
        &\notag P_L(w|u) \propto P_L(u) \cdot P_L(w) \Rightarrow  \\
        &\notag P_L(w|u) \propto c \cdot P_L(w) \Rightarrow  \\
        & P_L(w|u) \propto P_L(w)
    \end{align}

    Condition 1 for actionability fails.

    \item {\bf Relation} flouting: If $u$ is true and well-formed, but irrelevant to $G$, even if $u$ causes a non-trivial posterior update over $w$ (satisfying the first condition for actionability), the update has no bearing on progress toward $G$. $P_L(G|u) = P_L(G)$, and thus the uncertainty toward $G$ remains the same ($H(P_L(G|u)) = H(P_L(G))$). Condition 2 for actionability fails.

    \item {\bf Manner} flouting: If $u$ violates the maxim of manner by being unclear or ambiguous, then it has multiple interpretations, e.g., $\{\llbracket u \rrbracket_1,\dots, \llbracket u \rrbracket_N\}$ (rendered hereafter as $\{u_1,\ldots,u_N\}$ for simplicity). Each interpretation supports different posterior updates, and the listener must marginalize over them:
    \begin{align}
        P_L(w|u) = \sum_{k=1}^N P_L(w|u_k) \cdot P_L(u_k | u)
    \end{align}

    This creates a mixture of distributions whose entropy has a lower bound of the weighted combination of component entropies:
    \begin{align}
        H(P_L(w|u)) \geq \sum_{k=1}^N P_L(u_k | u) \cdot H(P_L(w|u_k))
    \end{align}

    Therefore adoption of any one of the interpretations causes $H(P_L(G|u_k))$ to rise relative to $H(P_L(G))$, or if no interpretation is adopted, $H(P_L(G))$ remains the same. Condition 2 for actionability is thus violated.
    
    \item {\bf Quantity} flouting: An utterance $u$ might violate the maxim of quantity in two ways—by being {\it underinformative} or being {\it overinformative}.
    \begin{itemize}
        \item If $u$ is {\it underinformative}, it means that speaker $S_i$ has withheld information relevant to $w$ that $d_i$ contains. Listener updates to $P_L(w|u)$ are made based on less information than $S_i$ has available. This leaves residual entropy about $G$ that could have been reduced if $u$ were fully informative.
        \item If $u$ is {\it overinformative}, it means that $u$ includes some information that is irrelevant to $G$. The listener cannot distinguish which parts of $u$ are relevant or not without knowing what part of $u$ is relevant already, but knowing that would render $u$ unnecessary.
    \end{itemize}

    In either case, entropy reduction $H(P_L(G|u)) < H(P_L(G))$ cannot be guaranteed and condition 2 for actionability fails.

    Note that while flouting the maxims of quality, relation, and manner deterministically violate one of the conditions for actionabilty, flouting the maxim of quantity does not necessarily do the same.  Flouting the maxim of quantity only prohibits a {\it guarantee} that entropy toward the goal is reduced. An over- or underinformative utterance {\it may} still be actionable in certain cases, even if actionability cannot be guaranteed in general. Instead, the maxim of quantity can be characterized as a {\it sufficient} condition for reliable actionability: adhering to the maxim of quantity guarantees a reduction in entropy toward the goal that may not otherwise occur.

    However, in specific cases, severe \textit{violations} of the maxim of quantity---which under \cite{grice1975logic} are often considered intentional---can also lead to violations of condition 1 for actionability (non-trivial posterior update):

    \begin{itemize}
        \item Severe \textit{underinformativeness}: if speaker $S_i$ withholds so much of $d_i$ that $u$ is essentially vacuous with respect to $w$, then $u$ contains no information dependent on $w$. Following the demonstration of flouting the maxim of \textit{quality} above, removing $u$'s dependence $w$ causes the listener's posterior to collapse: $P_L(w|u) \propto P_L(w)$. Severe underinformativeness collapses a quantity violation to a quality violation. Condition 1 for actionability is thus violated.
        \item Severe \textit{overinformativeness}: if $u$ mixes genuine signal about $w$ with ancillary noise, then the listener must determine which interpretation of $u$ to use by marginalizing over them, as in the manner case. This likewise creates a mixture of distributions and flattens the posterior, leading to a violation of condition 2 for actionability. In the limit of extreme noise, this drives $P_L(w|u) \rightarrow P_L(w)$, causing a violation of condition 1 as well.
    \end{itemize}
\end{enumerate}

Flouting or violating a single cooperative maxim result in violations of at least one condition for actionability in a CEA setting as defined (Sec.~\ref{sec:theory}).
\end{proof}

\begin{corollary}\label{corr:symm}
    In collaborative, fully observable, symmetric information settings (that is, settings where all agents have the equal access to the same set of observables) and the goal $G$ is commonly known, condition 1 (non-trivial posterior update) alone is necessary and sufficient for actionability. Under symmetric information conditions, the joint information state $\mathbf{d}$ is not partitioned and every agent's $d_i \cong \mathbf{d}$. All agents can then compute the same posterior update $P(w|\mathbf{d})$. The goal $G$ becomes identifiable with a specific region of the set of world states $\mathcal{W}$, $w^*$. With $G$ and $w^*$ thus conflated, $H(P_L(G)) \approx 0$ before any utterance is made. Entropy that does not exist cannot be reduced and so condition 2 for actionability loses its independent force under symmetric information conditions. Thus Lemma \ref{lemma:cooperativity} depends on the collaborative epistemic asymmetry (CEA) setting.
\end{corollary}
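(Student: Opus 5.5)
The statement to prove is Corollary~\ref{corr:symm}. The plan is to run Lemma~\ref{lemma:cooperativity}'s two actionability conditions through the symmetric, fully observable specialization of the CEA tuple $\langle\mathcal{W},\mathcal{U},G,\mathbf{d},\{P_i\}\rangle$. The work is to show that condition 2 either holds trivially or cannot be met by anything, so that it adds no constraint beyond condition 1. I will fix conventions first. Symmetric information means $\mathcal{I}_i=\mathcal{I}_j$ and $d_i=d_j=\mathbf{d}$ for all $i,j$. ``$G$ commonly known'' means there is a known region $w^*\subseteq\mathcal{W}$ with $G$ determined by $w^*$. Hence every agent's prior satisfies $P_i(G)=\delta_{w^*}$, and in particular the listener's prior $P_L(G)$ is a point mass.

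\textbf{Step 1: the goal entropy vanishes.} Since $P_L(G)$ is degenerate, $H(P_L(G))=0$. For any $u$ with $P_L(u)>0$, Bayes' rule gives $P_L(G\mid u)\propto P_L(u\mid G)P_L(G)$, which is again supported on the single atom. So $H(P_L(G\mid u))=0$ as well. I would state this carefully, since it is what licenses the paper's ``$\approx 0$'': any residual goal uncertainty in the symmetric case must be zero once $G$ is common knowledge.

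\textbf{Step 2: condition 2 loses independent force.} Literally, the strict inequality $0<0$ fails for every $u$. Read that way, no utterance would be actionable, which is absurd in a symmetric task. The main obstacle is this vacuity, and I would handle it by reformulating the criterion. In a CEA setting the role of condition 2 is to require that $u$ reduce goal uncertainty \emph{whenever such uncertainty exists}. So I would replace it by the non-strict version $H(P_L(G\mid u))\le H(P_L(G))$ with equality allowed when $H(P_L(G))=0$, or equivalently by the condition that either $H(P_L(G))=0$ or the inequality is strict. With Step 1, this holds for every $u$. Condition 2 therefore adds no constraint, and actionability reduces to condition 1.

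\textbf{Step 3: necessity and sufficiency of condition 1.} Necessity is inherited directly from the definition in Lemma~\ref{lemma:cooperativity}, because actionability is a conjunction that includes condition 1. For sufficiency, take any $u$ with $P_L(w\mid u)\not\propto P_L(w)$. By Step 2 condition 2 holds automatically, so $u$ is actionable.

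\textbf{Closing remark.} Finally I would note the contrast with CEA. The only place where Gricean relation and manner flouting defeated actionability in the proof of Lemma~\ref{lemma:cooperativity} was through condition 2. Once that condition is trivialized here, the lemma's conclusion genuinely depends on the asymmetric setting. This is the final sentence of the corollary.
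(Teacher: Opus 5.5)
Your proposal is correct and takes essentially the paper's route: the paper argues, inside the corollary itself, that once $G$ is commonly known (identified with $w^*$) the goal entropy $H(P_L(G))$ is already (approximately) zero, so condition 2 ``loses its independent force'' and condition 1 becomes the operative criterion. Your explicit point that the literal strict inequality would then make nothing actionable, together with your reformulation of condition 2 as binding only when goal entropy is positive, just makes precise the reinterpretation the paper leaves implicit in that phrase.
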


\begin{remark}
    Flouting the maxim of quality does not require that the speaker be lying. 
    They may simply misspeak, or misinterpret the information they intend to communicate, etc. 
    Such flouting still results in the failure of the first condition for actionability as defined. 
    Quality, relation, and manner are each shown to be individually necessary for actionability. 
    Quantity is shown to be sufficient for \textit{reliable} actionability that holds in expectation across utterances rather than by accident in a particular instance. 
    Flouting the maxim of quantity does not {\it prohibit} entropy reduction, but rather cannot guarantee that it takes place. 
    Extreme quantity \textit{violations} are shown to reduce to quality or manner violations. 
    Under Lemma \ref{lemma:cooperativity}, quality, relation, and manner are hard constraints in the CEA setting, while quantity is something more like a robustness or reliability condition---one that can be flouted in various ways and to various degrees to different effects, but which always adversely affects the actionability of the utterance.
    Under Corollary \ref{corr:symm}, without CEA, flouting the maxims of relation and manner, and certain cases of flouting the maxim of quantity, can be effectively ignored because the listener can observe what is true, knows what the goal is, and can independently plan how to achieve it.
    It is the CEA setting that necessitates cooperativity.
\end{remark}

\section{RSA Example from DPIP Transcript}
\label{app:rsa}

\subsection{Pragmatic Repair and Grounding in Multi-Agent Dialogue}

We analyze a representative dialogue segment from the real human DPIP collaborative building task \citep{zhu2025multimodal, zhu2026distributed} , illustrating ambiguity resolution, perceptual misalignment, and multi-agent pragmatic repair. The action of interest occurs at time $t=599.6$s (REMOVE $\texttt{rs}$ from $layer~1$). Table~\ref{tab:dialogue} presents a curated transcript leading up to this action.

\begin{table*}[t]
\centering
\small
\begin{tabular}{r r l p{5cm} l}
\toprule
Time (s) & $\Delta t$ (s) & Speaker & Utterance & Role \\
\midrule
570.0 & -29.6 & Builder & This green? & Clarification query \\
571.1 & -28.5 & Director & Yes. & Confirmation \\
571.3 & -28.3 & Director & Yeah, stick that on top of the blue. & Instruction \\
572.7 & -26.9 & Builder & Right here? & Clarification query \\
572.9 & -26.7 & Director & No, no, no, no. & Correction \\
574.1 & -25.5 & Director & The yellow that we have, for me it is showing green. & Perceptual mismatch \\
577.6 & -22.0 & Director & It's not yellow. & Correction \\
579.3 & -20.3 & Builder & Cause, uh, is it like a, like a darkish green? & Hypothesis proposal \\
583.6 & -16.0 & Director & No. & Rejection \\
584.6 & -15.0 & Director & It's normal green. & Refinement \\
585.9 & -13.7 & Builder & I cannot see any yellow over there. & Perceptual conflict \\
587.2 & -12.4 & Builder & So this is green? & Clarification query \\
588.1 & -11.5 & Director & Yes. & Confirmation \\
588.4 & -11.2 & Director & That's yellow for me. & Misalignment signal \\
589.8 & -9.8 & Builder & This is yellow? & Clarification query \\
590.6 & -8.9 & Director & No. & Correction \\
591.1 & -8.5 & Director & The long. & Referential refinement \\
591.9 & -7.7 & Builder & This is yellow. & Hypothesis proposal \\
592.7 & -6.9 & Director & Yeah, that one's yellow. & Alignment confirmation \\
594.9 & -4.7 & Builder & So this is yellow. & Grounding confirmation \\
596.1 & -3.5 & Builder & Which one's this color? & Clarification query \\
597.3 & -2.2 & Builder & Green? & Hypothesis proposal \\
598.0 & -1.6 & Director & Yeah, the bottom one is green. & Final alignment \\
599.6 & 0.0 & --- & \textbf{Action: REMOVE $\texttt{rs}$ from $layer~1$} & Execution \\
\bottomrule
\end{tabular}
\caption{Samples dialogue segment from the DPIP Lego task data \citep{zhu2026distributed} illustrating ambiguity resolution, perceptual misalignment, and multi-agent grounding prior to action execution. $\Delta t$ is measured relative to the action at $t=599.6$s. 
}
\label{tab:dialogue}
\end{table*}

\paragraph{RSA Interpretation}

We interpret this interaction through the lens of Rational Speech Acts (RSA), where the builder acts as a pragmatic listener maintaining a belief distribution $P(w|u)$ over possible world states $w$ given utterances $u$.

\begin{table}[t]
\centering
\small
\begin{tabular}{p{4.5cm} p{6cm}}
\toprule
Utterance & RSA Interpretation \\
\midrule
``This green?'' & High entropy belief $P(w|u)$; builder initiates information-seeking query \\
``Right here?'' & Active clarification to reduce spatial uncertainty \\
``The yellow... showing green'' & Divergence in agent-specific belief distributions $P_i(w|u) \neq P_j(w|u)$ \\
``I cannot see any yellow'' & Explicit signal of perceptual inconsistency \\
``So this is green?'' & Hypothesis testing under uncertain posterior \\
``That's yellow for me'' & Misaligned reference frames across agents \\
``The long'' & Speaker refinement to increase informativeness ($\arg\max_u P_L(w|u)$) \\
``Yeah, that one's yellow'' & Convergence toward shared belief; posterior sharpens \\
``Which one's this color?'' & Residual uncertainty; additional information gain step \\
``Yeah, the bottom one is green'' & Final alignment; $P(w|u) \rightarrow \delta(w^*)$ \\
\bottomrule
\end{tabular}
\caption{RSA-based interpretation of key utterances in the dialogue.}
\label{tab:rsa}
\end{table}

\paragraph{Analysis.}

This example illustrates a full pragmatic “belief repair” cycle. Initially, the builder's posterior $P(w|u)$ exhibits high entropy due to ambiguous and perceptually inconsistent references (e.g., color disagreement). The builder issues targeted clarification queries to maximize expected information gain, while multiple directors iteratively refine their utterances to increase informativeness. Notably, the dialogue reveals a mismatch in perceptual grounding, where different agents map the same object to different color categories. Through repeated corrections and confirmations, the agents align their internal representations, effectively converging to a shared belief state. Once the posterior over world states becomes sufficiently concentrated, the builder executes the intended action.

Formally, the interaction can be viewed as an iterative process where the speaker selects utterances according to:
\[
u^* = \arg\max_u \log P_L(w^*|u) - C(u),
\]
while the listener updates beliefs via:
\[
P(w|u) \propto P(u|w) \cdot P(w),
\]
and issues clarification queries when uncertainty remains high. The presence of multiple directors introduces a multi-agent extension, where belief updates incorporate multiple utterances:
\[
P(w|u_1, u_2, \dots) \propto \prod_i P(w|u_i).
\]

This interaction highlights that pragmatic communication in DPIP is not a single-turn inference process but a multi-step, collaborative grounding procedure under partial observability.

\section{CRAFT Simulator Usage}
\label{app:craft}

\subsection{Block Encoding and the World State}
\label{subsec:encoding}

The environment is defined over a $3 \times 3$ grid of positions, where each location is indexed by a coordinate pair $(i,j)$ with $i,j \in \{0,1,2\}$. Each grid cell contains an ordered stack of blocks, where the ordering corresponds to vertical layers.

Blocks are represented as two-character strings: the first character specifies the color---green, blue, red, yellow, or orange---and the second character indicates size, either small (\texttt{s}) or large (\texttt{l}). The complete set of admissible block types is given by:
\[
\mathcal{B} = \{ \texttt{gs}, \texttt{gl}, \texttt{bs}, \texttt{bl}, \texttt{rs}, \texttt{rl}, \texttt{ys}, \texttt{yl}, \texttt{os}, \texttt{ol} \}
\]

At any timestep, the world state is described by a function $S : \mathcal{C} \to \mathcal{B}^*$, where each coordinate $c \in \mathcal{C} = \{(i,j) \mid i,j \in \{0,1,2\}\}$ maps to a (possibly empty) ordered sequence of blocks. Each stack is bounded to a maximum height of three layers.

\subsection{Structure Generation}
\label{subsec:generation}

We construct target structures through a two-phase procedure: assigning stack heights across the grid, followed by independently populating each layer with blocks.

\paragraph{Stack Height Assignment}
The grid positions are divided into two categories. Seven \emph{required} positions---all cells except $(1,1)$ and $(2,1)$---are deterministically assigned a height of three layers. The remaining two \emph{optional} positions, $(1,1)$ and $(2,1)$, are assigned heights drawn independently and uniformly from $\{0,1,2\}$. This scheme ensures that the outer region of the grid forms a consistently tall structure, while the interior exhibits variability in depth.

\paragraph{Layer Tiling}
Given the height assignments, each layer is populated independently. For a particular layer, the subset of positions requiring blocks is determined by the previously sampled heights. These positions are then filled using a combination of small and large blocks. Large blocks occupy two orthogonally adjacent cells within the same layer, forming a domino configuration, and are never placed across layers. Small blocks occupy a single cell.

For each candidate position, the generator probabilistically attempts to place a domino with an available orthogonal neighbor. If no suitable neighbor exists or the attempt is unsuccessful, a small block is placed instead. Block colors are sampled uniformly from the set of five colors. To reduce repetitive patterns, the generator performs a limited number of retries to avoid assigning identical block types to the same position in consecutive layers.

\paragraph{Complexity Classification}
After generation, structures are categorized based on their total number of blocks. Structures containing at most 22 blocks are labeled \emph{simple}, those with 23--24 blocks are labeled \emph{medium}, and those with more than 24 blocks are labeled \emph{complex}. Since the required positions always contribute 21 blocks (seven positions each with three layers), variation in complexity primarily arises from the optional positions and the frequency of large blocks, which may increase block counts when large block placements cover positions that would otherwise remain empty.

\section{Progress Measurement}
\label{sec:progress}

\subsection{Metrics}
\label{subsec:metrics}

Progress toward the target structure $S^*$ is measured after each
successful move and computes four
complementary metrics over the normalized representations of the
current state $S_t$ and target $S^*$.

\paragraph{Intersection over Union (IoU)} For each position
$c \in \mathcal{C}$, let $A_c = \{b \in S_t(c)\}$ and
$B_c = \{b \in S^*(c)\}$ be the multisets of blocks treated as
sets. The IoU score aggregates overlap across all positions:

\[
\text{IoU}(S_t, S^*) = \frac{\sum_{c \in \mathcal{C}} |A_c \cap B_c|}
{\sum_{c \in \mathcal{C}} |A_c \cup B_c|}
\]

This metric is insensitive to block order within a stack and rewards
partial position matches.

\paragraph{Completion Percentage} This metric measures layer-exact
correctness --- a block at position $c$ and layer $k$ counts as correct
only if it matches $S^*(c)[k]$:

\[
\text{CP}(S_t, S^*) = \frac{\sum_{c \in \mathcal{C}}
\sum_{k=0}^{|S^*(c)|-1} \mathbf{1}[S_t(c)[k] = S^*(c)[k]]}
{\sum_{c \in \mathcal{C}} |S^*(c)|}
\]

\paragraph{Position Accuracy} A coarser metric that rewards positions
where the set of blocks matches exactly, regardless of layer order:

\[
\text{PA}(S_t, S^*) = \frac{1}{9}
\sum_{c \in \mathcal{C}} \mathbf{1}[\{b : b \in S_t(c)\} =
\{b : b \in S^*(c)\}]
\]

\paragraph{Overall Progress} The scalar summary used for termination
and trend analysis is the unweighted mean of the three metrics:

\[
\text{OP}(S_t, S^*) = \frac{\text{IoU} + \text{CP} + \text{PA}}{3}
\]

\subsection{Progress Tracking and Delta Computation}
\label{subsec:tracking}

After each move, we compute a delta
$\Delta_t = \text{OP}(S_t, S^*) - \text{OP}(S_{t-1}, S^*)$
relative to the previous turn, for each metric. A recent trend estimate is computed
over a sliding window of size 3:

\[
\tau = \frac{1}{\min(t, 3)} \sum_{i=t-2}^{t} \Delta_i
\]

The game is considered \emph{improving} if $\tau > -0.05$, allowing
for small temporary regressions. An estimated turns remaining is
computed as $\lceil (1 - \text{OP}_t) / \bar{\Delta} \rceil$ where
$\bar{\Delta}$ is the average per-turn progress, though this estimate
degrades in stagnation conditions where $\bar{\Delta} \approx 0$.

\subsection{Director View Projections}
\label{subsec:projections}

Each director is assigned a fixed 2D projection of the 3D world, corresponding to a distinct face of the structure. These projections are defined as follows:

\paragraph{D1 --- Left Column View}
D1 observes the cells $(0,0)$, $(1,0)$, and $(2,0)$ across all vertical layers, corresponding to the left-facing side of the structure.

\paragraph{D2 --- Top Row View}
D2 observes the cells $(0,0)$, $(0,1)$, and $(0,2)$ across all vertical layers, corresponding to the far-facing side.

\paragraph{D3 --- Right Column View}
D3 observes the cells $(0,2)$, $(1,2)$, and $(2,2)$ across all vertical layers, corresponding to the right-facing side.

Within each projection, blocks are displayed from left to right according to the director’s viewing orientation. Each visible cell is encoded as a color--size pair, while empty cells are represented with color \texttt{none}. A large block is represented as size 2 only when both cells of its domino placement lie within the director’s field of view; otherwise, it appears as size 1 since only a single face is observable.

\paragraph{Information Coverage}
The views of D1 and D3 overlap at a single coordinate, $(0,0)$, which serves as a shared reference point between the two lateral perspectives. In contrast, D2 uniquely observes interior cells such as $(1,1)$ and $(2,1)$, corresponding to the optional positions. As a result, D2 plays a crucial role in conveying information about interior structure depth. No individual director has sufficient information to reconstruct the full 3D state independently; effective collaboration requires each director to communicate information that is inaccessible to the others.

\section{Details on Generated Data}

\subsection{Descriptive Statistics}
\label{app:gen-data-stats}

\begin{table}[h!]
  \centering
  \resizebox{\textwidth}{!}{%
  \begin{tabular}{l ccc p{0.5cm} ccc}
    \toprule
    & \multicolumn{3}{c}{\textbf{Overall Progress $\Delta$}} && \multicolumn{3}{c}{\textbf{Completion \% $\Delta$}} \\
    \cmidrule(lr){2-4} \cmidrule(lr){6-8}
    \textbf{Start State} & \textbf{Mean$_{\pm \text{SEM}}$} & \textbf{Min} & \textbf{Max} && \textbf{Mean$_{\pm \text{SEM}}$} & \textbf{Min} & \textbf{Max} \\
    \midrule
    Empty            & $0.220_{\pm 0.019}$  & 0.030  & 0.458 && $0.280_{\pm 0.022}$  & 0.043  & 0.542 \\
    First Layer      & $0.015_{\pm 0.019}$  & -0.164 & 0.277 && $0.031_{\pm 0.020}$  & -0.182 & 0.261 \\
    First Two Layers & $-0.057_{\pm 0.020}$ & -0.279 & 0.204 && $-0.049_{\pm 0.018}$ & -0.227 & 0.120 \\
    D1 Wall          & $0.000_{\pm 0.013}$  & -0.205 & 0.167 && $0.043_{\pm 0.012}$  & -0.130 & 0.182 \\
    D2 Wall          & $0.046_{\pm 0.024}$  & -0.357 & 0.388 && $0.115_{\pm 0.022}$  & -0.091 & 0.409 \\
    D3 Wall          & $0.090_{\pm 0.017}$  & -0.108 & 0.321 && $0.140_{\pm 0.019}$  & -0.087 & 0.348 \\
    \midrule
    All              & $0.063_{\pm 0.010}$  & -0.357 & 0.458 && $0.107_{\pm 0.011}$  & -0.227 & 0.542 \\ 
    \bottomrule
  \end{tabular}}
    \caption{Overall Progress and Completion \% $\Delta$ from Start over 100 test structures, by start condition completion category (Builder: GPT-4.1-mini, Director: GPT-4.1-mini).}
  \label{tab:combined_deltas_categories}
\end{table}

Table~\ref{tab:combined_deltas_categories} shows Overall Progress and Completion \% $\Delta$ from Start over the 2$\times$100 runs of the "training" structures, broken down by start condition completion category.

\subsection{Preference Data Selection}
\label{app:pref-data}

For creation of the preference dataset (Sec.~\ref{ssec:generation}), selection of chosen utterance $y_w$ and rejected utterance $y_\ell$ given dialogue context $x$ and turn $t$ was performed using the following method:
\begin{enumerate}
    \item The Builder's confirmation (Sec.~\ref{ssec:task-env}) was searched for mentions of specific Directors (D1--D3); if only one match was found, that Director's utterance was retained as $y_w$.
    \item Otherwise, we built keystrings of key words in the Builder's confirmation and in the matching Director's utterances\footnote{If no Director label was found in the Builder's confirmation, this step checks against {\it all} Director utterances from $t$.} (consisting of color, size, or spatial relation terms), computed the IoU \citep{jaccard1912distribution} over the Builder keystring and each Director keystring, and retained as $y_w$ the Director utterance whose keystring returns the highest value.
    \item In the event of a tie in IoU, we used {\tt SentenceTransformers-MiniLM-L6-v2} \citep{reimers2019sentence} to construct embeddings of the Builder confirmation and the remaining candidate Director utterances and used the highest cosine similarity to break ties.
    \item All other utterances in the turn not selected according to this method were paired with the $y_w$ utterance as an associated $y_\ell$.
\end{enumerate}

\section{Prompts}
\label{app:prompts}

This section describes prompt infixes used in the prompting experiments and for calculating LLM-Judge-based metrics. All other prompts remain unchanged from the pre-specified system prompts in the CRAFT simulator, reported in \cite{nath2026craft}.

\subsection{Literal and Pragmatic Builder Prompts}

The Literal and Pragmatic prompts used in the main experiments were both Variant 1, given below. The other variants were used in experiments to test sensitivity to prompt wording (see Appendix~\ref{app:prompt-sensitivity}).

\begin{tcolorbox}[title=Literal Prompt Variant 1]
Read each director's message at face value.
Use only the semantic meaning of the directors' utterances and your prior beliefs in your interpretation. 
Disregard director intent or informativeness and consider only the literal interpretation.
\end{tcolorbox}

\begin{tcolorbox}[title=Pragmatic Prompt Variant 1]
Infer what you can from the director's message. \\

Before deciding on a move, consider the following and make inferences based on your own thinking: \\
1. Identify which director gave the most informative description in order to take an action and further the task.\\
2. Do you think any of the dialogue is false or misleading?\\
3. Do you find the dialogue to be appropriate and focused on the task?\\
4. Do you find the meaning of any words used in the dialogue to be unclear or obscure?
\end{tcolorbox}

\begin{tcolorbox}[title=Literal Prompt Variant 2]
Interpret each director's messages with strict literalism.
Use only the semantic meaning of the directors' utterances and your prior beliefs in your interpretation. 
Treat the director statements as a series of literal propositions, independent of the speaker's goals.
\end{tcolorbox}

\begin{tcolorbox}[title=Pragmatic Prompt Variant 2]
Infer what you can from the director's message. \\

Before deciding on a move, consider the following and make inferences based on your own thinking: \\
1. Do you find any of the dialogue to be unnecessary in order for you to understand the perspectives of the directors? \\
2. Do find that any of the observations or conclusions made in the dialogue lack adequate evidence to support their ideas? \\ 
3. Do you find the dialogue to be appropriate and focused on the task? \\
4. Do you find the organization of the dialogue to be clear and easy to follow?
\end{tcolorbox}

\begin{tcolorbox}[title=Literal Prompt Variant 3]
Restrict interpretation to the compositional semantics of each utterance.
Disregard pragmatic inference or speaker intent; evaluate statements solely as independent logical propositions.
Base interpretations exclusively on literal meaning and established prior knowledge.
\end{tcolorbox}

\begin{tcolorbox}[title=Pragmatic Prompt Variant 3]
Infer what you can from the director's message. \\

Before deciding on a move, consider the following and make inferences based on context, intent, and shared knowledge \\
1. Do you find the dialogue to be informative enough to take an action and further the task? \\
2. Is any of this phrasing potentially misrepresentative? \\
3. Does the exchange remain focused, or does it drift into irrelevant details? \\
4. Is the structure of this conversation easy to navigate?
\end{tcolorbox}
\vfill
\clearpage

\subsection{Common Ground LLM-Judge Prompt}
\label{ssec:cg-judge}

\begin{tcolorbox}[title=Common Ground Prompt (I): Identity and Game State]
You are a Common Ground Analysis agent for a collaborative LEGO construction task.  \\

Your job is to determine the TRUE alignment between three directors (D1, D2, D3) based on both their public messages and private internal thoughts. \\

\vspace{2mm}
CURRENT BOARD STATE: \\
\texttt{\{current\_board\_state\}} \\

\vspace{2mm}
CONVERSATION HISTORY: \\
\texttt{\{conversation\_history\}} \\

\vspace{2mm}
DIRECTOR INTERNAL THOUGHTS (PRIVATE):\\
    D1 Internal: \texttt{\{d1\_internal\}} \\
    D2 Internal: \texttt{\{d2\_internal\}} \\
    D3 Internal: \texttt{\{d3\_internal\}} \\

DIRECTOR PUBLIC MESSAGES:\\
    D1 Message: \texttt{\{d1\_public\}}\\
    D2 Message: \texttt{\{d2\_public\}}\\
    D3 Message: \texttt{\{d3\_public\}}\\
\end{tcolorbox}
\vfill
\clearpage

\begin{tcolorbox}[title=Common Ground Prompt (II): Analysis Instructions]
ANALYSIS INSTRUCTIONS:\\
1. Look for discrepancies between what directors say publicly vs. think privately\\
2. Identify blocks/positions where directors have genuine alignment vs. surface agreement\\
3. Detect uncertainty, confusion, or hidden disagreements\\
4. Consider confidence levels expressed in internal thoughts\\

OUTPUT TWO SECTIONS:\\

\textless analysis\textgreater\\
Provide a brief analysis (3-4 sentences) about the true alignment state. Highlight any:\\
- Surface agreements that mask underlying disagreement/uncertainty\\
- Positions where directors are genuinely aligned\\
- Areas of confusion or misunderstanding\\
- Confidence mismatches between directors\\
\textless /analysis\textgreater\\

\textless aligned\_structure\textgreater
\begin{verbatim}
{{
    "D1": {{
        "row_0": 
            [{{
                "color":"[color]", 
                "size":[size],
                "confidence":"high/medium/low"
            }}, 
            {{
                "color":"[color]", 
                "size":[size],
                "confidence":"high/medium/low"
            }},
            {{
                "color":"[color]", 
                "size":[size],
                "confidence":"high/medium/low"
            }}],
        "row_1": [...],
        "row_2": [...]
        }},
    "D2": {{...}},
    "D3": {{...}}
}}
\end{verbatim}
\textless /aligned\_structure\textgreater\\

IMPORTANT NOTES:\\
- Use "unknown" for positions not mentioned or unclear\\
- Use "disputed" for positions where directors disagree\\
- Set confidence based on internal thoughts: "high" = certain, "medium" = somewhat sure, "low" = uncertain/guessing\\
- The aligned structure should reflect what each director TRULY believes (from internal thoughts), not just what they said publicly\\
- Colors: "red", "blue", "green", "yellow", "orange", "brown", "unknown", "disputed"\\

\end{tcolorbox}

\subsection{Director Archetype Prompts}

\begin{tcolorbox}[title=Assertive Archetype Prompt]
    You are confident and direct. You form hypotheses quickly from your data 
    and share them, but you genuinely listen to other groups and update your
    thinking when their evidence is compelling. You sometimes move faster than
    the evidence warrants but you're not closed-minded.\\
\end{tcolorbox}

\begin{tcolorbox}[title=Cautious Archetype Prompt]
     You are methodical and prefer to verify before claiming. You ask clarifying 
    questions and often synthesize what others have said before adding your own 
    interpretation. You can make claims when evidence is strong enough — you're 
    not paralyzed, just careful.\\
\end{tcolorbox}

\begin{tcolorbox}[title=Observant Archetype Prompt]
    You notice patterns and anomalies in your data that others might overlook.
    You tend to flag inconsistencies and ask ``does this match what you're seeing?''
    rather than broadcasting conclusions. You're collaborative by nature and often connect dots across groups. \\
\end{tcolorbox}

\begin{tcolorbox}[title=Skeptical Archetype Prompt]
    You question assumptions including your own. When someone makes a claim you probe it — not to be difficult but because you want the group to get it right. You're comfortable with uncertainty and say so openly. \\
\end{tcolorbox}
    
\begin{tcolorbox}[title=Synthesizer Archetype Prompt]
    You actively try to integrate what all groups are saying into a coherent picture. You summarize, reconcile contradictions, and push the group toward a shared understanding. You ask ``how does your data fit your view and what the other directors have said?''\\ 
\end{tcolorbox}
\vfill
\clearpage

\subsection{Quality LLM-Judge Prompt}

\begin{tcolorbox}[title=Spatial Grounding Quality Judge Prompt]
You are evaluating the spatial grounding quality of a director agent in a collaborative construction task. The director has a private view of one wall of a 3D target structure and must reason about what blocks are missing before instructing a builder. \\

TARGET VIEW (what this director needs the structure to look like): \\
{\tt \{target\_view\}}\\

CURRENT BOARD STATE: \\
{\tt \{board\_state\}} \\

DIRECTOR PUBLIC MESSAGE: \\
{\tt \{public\_message\}} \\

EVALUATION: \\
For each question below answer with ``Yes" or ``No", and provide a brief one-sentence justification. \\

Questions: \\
1. Does the director's public message accurately identify at least one block that appears in their privately-held target view of their wall? \\
2. Does the public message correctly interpret the size of this block (small versus large) as it appears in the target view? \\
3. Does the director's public message accurately reference the correct layer that this block should be at, accounting for what is already stacked at that position on the current game board? \\

Return your response as a JSON object with keys SG1 through SG3, each containing an ``answer" field (``Yes" or ``No") and a ``reason" field. Return only valid JSON with no additional text.
\end{tcolorbox}

Due to budget constraints, the Spatial Grounding Quality Judge was run once over each Director utterance. As validation, we ran the Judge prompt over a subsample of utterance (consisting of 10 games with Llama variants as Directors), and established that std. error of the mean was very low (Table~\ref{tab:sg-judge-val}), validating the use of a single Judge evaluation.

\begin{table}[h!]
    \centering
    \begin{tabular}{lcccc}
    \toprule
         {\bf Director Model}      & {\bf SG1}          & {\bf SG2}          & {\bf SG3}          & {\bf Overall} \\
    \midrule
         Llama 3.1-8B-Instruct     & 0.857$_{\pm0.007}$ & 0.400$_{\pm0.004}$ & 0.080$_{\pm0.007}$ & 0.445$_{\pm0.006}$ \\
         Llama 3.1-8B-Instruct+SFT & 0.763$_{\pm0.010}$ & 0.700$_{\pm0.005}$ & 0.097$_{\pm0.000}$ & 0.520$_{\pm0.002}$ \\
         Llama 3.1-8B-Instruct+DPO & 0.929$_{\pm0.002}$ & 0.816$_{\pm0.000}$ & 0.071$_{\pm0.007}$ & 0.606$_{\pm0.003}$ \\
    \bottomrule
    \end{tabular}
    \caption{Outcomes of Spatial Grounding Quality LLM-Judge (Mean$_{\pm \text{SEM}}$) run twice over a subsample of dialogues (10 games).}
    \label{tab:sg-judge-val}
\end{table}




\section{Model Training Hyperparameters}
\label{app:hyperparams}

We use Qwen2.5-7B-Instruct\footnote{\url{https://huggingface.co/Qwen/Qwen2.5-7B-Instruct}} and Llama 3.1-8B-Instruct\footnote{\url{https://huggingface.co/meta-llama/Meta-Llama-3-8B-Instruct}} as base models for our finetuning experiments. Specifically, we trained the SFT~\citep{ouyang2022training}, DPO~\citep{rafailov2024direct} and IPO~\citep{azar2024general} baselines using these two base models as the initial starting point. Similar to prior work~\citep{rafailov2024direct}, we use SFT-trained reference models for preference-alignment training in all DPO and IPO baselines. 

\subsection*{Supervised Fine-Tuning Baseline}

Both SFT baselines---Qwen2.5-7B-Instruct and Llama 3.1-8B-Instruct---were fine-tuned using parameter-efficient fine-tuning via LoRA~\citep{hu2021loralowrankadaptationlarge}. Supervised learning loss was applied on the chosen utterances $y_w$ (Sec.~\ref{app:pref-data}) with a held-out validation split evaluated every
60 steps. We trained for 3 epochs with a learning rate of $2\times10^{-5}$,
effective batch size of 48 (batch size 12, gradient accumulation 4), and maximum
sequence length of 1024 tokens. LoRA rank was set to $r=32$ with scaling
$\alpha=16$ and dropout 0.05, applied to all attention projection layers with no
quantization.
Full hyperparameters are summarized in Table~\ref{tab:sft_hparams}.

\begin{table}[h]
\centering
\small
\begin{tabular}{lc}
\toprule
\textbf{Hyperparameter} & \textbf{Value} \\
\midrule
Training epochs      & 3 \\
Learning rate        & $2\times10^{-5}$ \\
Batch size           & 12 \\
Gradient accumulation steps & 4 \\
Effective batch size & 48 \\
Max sequence length  & 1024 \\
LoRA rank $r$        & 32 \\
LoRA $\alpha$        & 16 \\
LoRA dropout         & 0.05 \\
Quantization         & None \\
Evaluation interval  & every 60 steps \\
\bottomrule
\end{tabular}
\caption{SFT baseline hyperparameters.}
\label{tab:sft_hparams}
\end{table}

\subsection*{DPO and IPO Baselines}

For both initial base models, preference aligment baselines that require constrastive sample training—DPO~\citep{rafailov2024direct} and IPO~\citep{azar2024general}—were initialized from the SFT checkpoint (epoch 3) rather than the base model, following standard practice. We use both the chosen utterances $y_w$ and the rejected utterances $y_\ell$ as contrastive utterance samples (Sec.~\ref{app:pref-data}) and trained for 4 epochs with a learning rate of $5\times10^{-6}$, effective batch size of 24 (batch size 6, gradient accumulation 4), and KL regularization
coefficient $\beta=0.1$. LoRA configuration was identical to the SFT stage
($r=32$, $\alpha=16$, no dropout) with 4-bit quantization enabled to accommodate
the larger gradient memory footprint of contrastive training.
Hyperparameters are summarized in Table~\ref{tab:dpo_hparams}.

\begin{table}[h]
\centering
\small
\begin{tabular}{lc}
\toprule
\textbf{Hyperparameter} & \textbf{Value} \\
\midrule
Initialization           & SFT checkpoint  \\
Training epochs          & 4 \\
Learning rate            & $5\times10^{-6}$ \\
Batch size               & 6 \\
Gradient accumulation steps & 4 \\
Effective batch size     & 24 \\
Max sequence length      & 1024 \\
$\beta$ (KL penalty)     & 0.1 \\
LoRA rank $r$            & 32 \\
LoRA $\alpha$            & 16 \\
Quantization             & 4-bit (QLoRA) \\
Evaluation interval      & every 100 steps \\
\bottomrule
\end{tabular}
\caption{Preference alignment baseline hyperparameters.}
\label{tab:dpo_hparams}
\end{table}

\section{Selection of Models and Baselines for Primary Experiments}
\label{app:model-selection}

\subsection{Director and Builder Model Selection}

One primary goal in the selection of models for the Director and Builder roles was to maximize diversity of the Director utterances while not sacrificing task progress (requiring a strong Builder model). Table~\ref{tab:bold_director_comparison} shows a lexical and embedding diversity comparison of Director utterances generated by GPT-4.1-mini and GPT-4o-mini in a sample game using the {\it base prompt} condition, and by GPT-4.1-mini using the base prompt with Directors roleplaying a fixed, randomly assigned set of archetypes (prompts in Appendix~\ref{app:prompts}).




\begin{table}[htbp]
  \centering
  \begin{tabular}{l ccccc}
    \toprule
    \textbf{Director Model} & \textbf{SelfBLEU-4$\downarrow$} & \textbf{Dist-1$\uparrow$} & \textbf{Dist-2$\uparrow$} & \textbf{Rep-2$\downarrow$} & \textbf{Emb. Div.$\uparrow$} \\
    \midrule
    GPT-4o-mini                 & 35.3 & 0.082 & 0.195 & 0.230 & 0.134 \\
    GPT-4.1-mini                & \textbf{31.2} & 0.100 & 0.236 & \textbf{0.157} & 0.100 \\
    GPT-4.1-mini + archetypes   & 31.8 & \textbf{0.116} & \textbf{0.275} & 0.190 & \textbf{0.141} \\
    \bottomrule
  \end{tabular}
\caption{Comparative lexical and embedding diversity over one game using the same goal structure, comparing GPT-4.1-mini and GPT-4o-mini in the Director role (Builder: GPT-4.1-mini).}
  \label{tab:bold_director_comparison}
\end{table}

\begin{itemize}
    \item GPT-4.1-mini shows improved lexical diversity with a lower SelfBLEU-4 (31.2 vs. 35.3) and higher Distinct-1/2 scores.
    \item GPT-4.1-mini also significantly reduces repetition, with a Repetition-2 score of 0.157 compared to 0.230.
    \item GPT-4o-mini maintains a higher Embedding Diversity (0.134 vs. 0.0995), suggesting more varied semantic positioning despite the higher lexical repetition. 
\end{itemize}

These outcomes, all assessed using the base prompt from CRAFT before the experimentation with prompting or post-training, motivated the choice of GPT-4.1-mini as the Director model in training data generation. GPT-4.1-mini produces meaningfully more diverse utterances, even when both models use Director archetype prompts to increase lexical diversity. The archetype prompting is a common technique with precedent in the literature \citep{li2023camel,chen2024persona,nath2025collaborate,nath2025let} that meaningfully increases both lexical and semantic diversity of utterances over the majority of metrics.

Fig.~\ref{fig:combined_gpt4o_20} and Table~\ref{tab:builder_vertical_combined} show sample experimental outcomes with GPT-4o-mini in the role of the Builder.

\begin{table}[h!]
  \centering
  \resizebox{\textwidth}{!}{
  \begin{tabular}{l cc p{0.5cm} cc}
    \toprule
    & \multicolumn{2}{c}{\textbf{Overall Progress $\Delta$}} && \multicolumn{2}{c}{\textbf{Completion \% $\Delta$}} \\
    \cmidrule(lr){2-3} \cmidrule(lr){5-6}
    \textbf{Prompt Variant} & \textbf{Mean$_{\pm \text{SEM}}$} & \textbf{Range} && \textbf{Mean$_{\pm \text{SEM}}$} & \textbf{Range} \\
    \midrule
    \multicolumn{6}{l}{\textit{Builder: GPT-4.1-mini}} \\
    Base w/ tool       & $0.014_{\pm 0.015}$  & ($-0.152$, {\bf 0.272}) && $0.058_{\pm 0.016}$  & ($-0.125$, {\bf 0.360}) \\    
    Base w/o tool      & $-0.001_{\pm 0.010}$ & ($-0.131$, $0.120$)     && $0.014_{\pm 0.013}$  & ($-0.130$, $0.160$) \\ 
    Literal w/ tool    & $0.027_{\pm 0.014}$  & ($-0.156$, $0.224$)     && $0.064_{\pm 0.014}$  & ($-0.120$, $0.280$) \\
    Literal w/o tool   & $-0.004_{\pm 0.016}$ & ($-0.308$, $0.216$)     && $0.023_{\pm 0.015}$  & ($-0.217$, $0.240$) \\
    Pragmatic w/ tool  & ${\bf 0.050_{\pm 0.013}}$ & ({\bf $-0.102$}, $0.241$) && ${\bf 0.087_{\pm 0.016}}$ & ({\bf $-0.120$}, $0.292$) \\
    Pragmatic w/o tool & $-0.020_{\pm 0.016}$ & ($-0.418$, $0.163$)     && $-0.001_{\pm 0.015}$ & ($-0.261$, $0.240$) \\
    \midrule
    \multicolumn{6}{l}{\textit{Builder: GPT-4o-mini}} \\
    Base w/ tool       & ${\bf 0.009_{\pm 0.017}}$  & (${\bf-0.418}$, ${\bf 0.228}$) && ${\bf 0.047_{\pm 0.012}}$  & ($-0.083$, {\bf 0.280}) \\    
    Base w/o tool      & $-0.061_{\pm 0.028}$ & ($-0.630$, $0.068$)     && $0.021_{\pm 0.005}$  & (${\bf 0.000}$, $0.095$) \\ 
    Literal w/ tool    & $-0.011_{\pm 0.022}$ & (${\bf -0.418}$, $0.171$)     && ${\bf 0.047_{\pm 0.011}}$  & ($-0.048$, $0.240$) \\
    Literal w/o tool   & $-0.066_{\pm 0.027}$ & ($-0.630$, $0.058$)     && $0.009_{\pm 0.005}$  & ($-0.091$, $0.087$) \\
    Pragmatic w/ tool  & $0.007_{\pm 0.016}$ & ({${\bf -0.418}$}, $0.142$) && $0.040_{\pm 0.009}$ & ({\bf $-0.091$}, $0.200$) \\
    Pragmatic w/o tool & $-0.043_{\pm 0.026}$ & ($-0.630$, $0.103$)     && $0.026_{\pm 0.006}$  & (${\bf 0.000}$, $0.167$) \\
    \bottomrule
  \end{tabular}}
  \caption{Overall Progress and Completion \% $\Delta$ by Builder and Prompting Condition ($N=40$)}
  \label{tab:builder_vertical_combined}
\end{table}

\begin{figure}[h!]
    \centering
    \includegraphics[width=\linewidth]{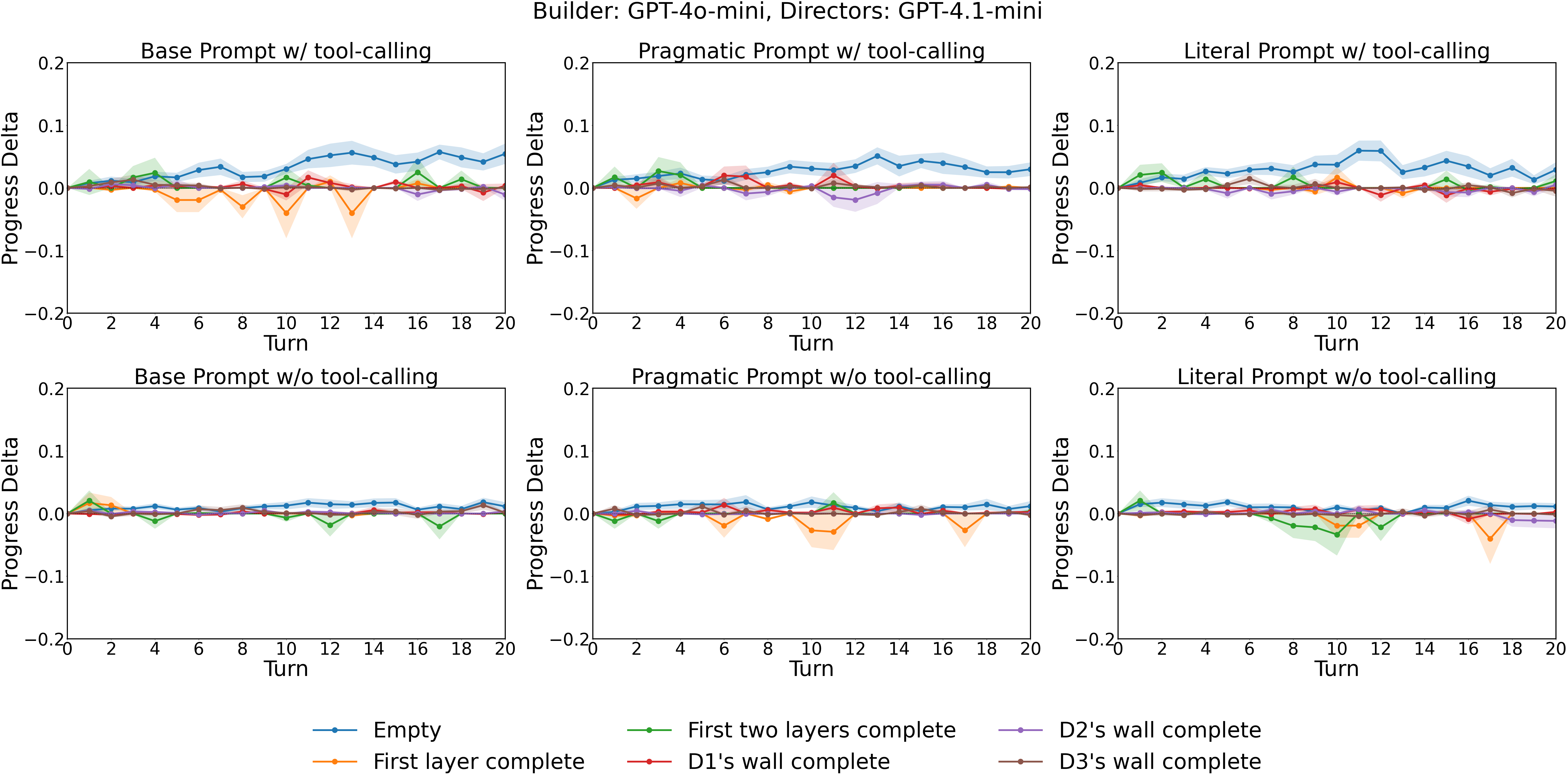}
    \caption{Prompting and tool-calling outcomes results with GPT-4o-mini as Builder and GPT-4.1-mini as Directors. Shading reflects std. error of the mean.}
    \label{fig:combined_gpt4o_20}
\end{figure}

Many times the GPT-4o-mini Builder shows no progress from the start condition, or Markov Chain-like behavior where the entire game consists of placing and removing the same block, resulting in no net change. Task progress is uniformly worse than when GPT-4.1-mini played the Builder, motivating the choice of GPT-4.1-mini as the Builder in the main experiments, including post-training experiments.

\subsection{Alignment Baseline Selection}

Our choice to focus on offline Director alignment using DPO and IPO, as opposed to online methods like PPO, arose from both principled and practical concerns. PPO and other online methods require continued environmental interaction during training, including generating rollouts, receiving rewards, and iterative updating. For 7-8B parameter models, including all combinations of prompting and exploration conditions in the rollout and reward-generation process, this would have substantially expanded the required compute and cost budget, rendering such a choice infeasible.

Additionally, online RL requires design of a suitable reward signal. Natural choices in this task would be the move correctness score or one of the objective progress metrics like Overall Progress $\Delta$, but these signals are delayed and sparse---occurring only after the end of a turn that includes multiple utterances and potential Builder exploration. The offline preference dataset $\mathcal{D}$ meanwhile, isolates signal at a contrastive level that is more granular than simply per-turn. Defining a dense per-utterance reward that accurately reflects Gricean cooperativity remains the topic of future work. Therefore, we assessed the problem of exploring the effects of Director alignment on task performance rather than training a globally optimal Director policy, and our results show that training on examples of more vs. less actionable utterances \textit{can} shift Director behavior in the right direction, but it remains far from optimal. Offline methods are {\it not} strictly sufficient in this task. DPO's gains are fragile and SFT or IPO frequently impede task performance.

\section{Sensitivity to Prompt Phrasing}
\label{app:prompt-sensitivity}

Tables~\ref{tab:promptsens1} and \ref{tab:promptsens2}, and Figs.~\ref{fig:combinedPrompt1}--\ref{fig:combinedPrompt3} show results from tests of a subsample of structures (1 per start condition) using the Literal and Pragmatic Builder prompt variants (Appendix~\ref{app:prompts}).

\begin{table}[h!]
  \centering
  \begin{tabular}{lcccc}
    \toprule
   \textbf{Prompt Variant} & \textbf{Mean} & \textbf{SEM} & \textbf{Min} & \textbf{Max} \\
    \midrule
    Literal 2 w/ tools  & 0.017 & 0.019 & -0.105 & 0.088 \\
    Literal 2 w/o tools & -0.019 & 0.014 & -0.128 & 0.056 \\
    Pragmatic 2 w/ tools & 0.019 & 0.023 & -0.072 & 0.186 \\
    Pragmatic 2 w/o tools & -0.037 & 0.037 & -0.418 & 0.081 \\
    Literal 3 w/ tools & 0.056 & 0.016 & 0.003 & 0.198 \\
    Literal 3 w/o tools & -0.011 & 0.012 & -0.085 & 0.044 \\
    Pragmatic 3 w/ tools & 0.014 & 0.014 & -0.035 & 0.101 \\
    Pragmatic 3 w/o tools & -0.039 & 0.036 & -0.418 & 0.048 \\
    \bottomrule
  \end{tabular}
  \caption{Overall Progress $\Delta$ from Start (Builder: GPT-4.1-mini, Director: GPT-4.1-mini, 1 structure per start condition)}
  \label{tab:promptsens1}
\end{table}

\begin{table}[h!]
  \centering
  \begin{tabular}{lcccc}
    \toprule
   \textbf{Prompt Variant} & \textbf{Mean} & \textbf{SEM} & \textbf{Min} & \textbf{Max} \\
    \midrule
     Literal 2 w/ tools  & 0.045 & 0.016 & -0.043 & 0.136 \\
     Literal 2 w/o tools & -0.008 & 0.017 & -0.130 & 0.091 \\
     Pragmatic 2 w/ tools & 0.068 & 0.023 & -0.043 & 0.227 \\
     Pragmatic 2 w/o tools & 0.006 & 0.015 & -0.095 & 0.080 \\
     Literal 3 w/ tools & 0.093 & 0.021 &  0.000 & 0.273 \\
     Literal 3 w/o tools & 0.007 & 0.013 & -0.048 & 0.083 \\
     Pragmatic 3 w/ tools & 0.048 & 0.015 &  0.000 & 0.182 \\
     Pragmatic 3 w/o tools & 0.006 & 0.015 & -0.095 & 0.087 \\
    \bottomrule
  \end{tabular}
  \caption{Completion \% $\Delta$ from Start (Builder: GPT-4.1-mini, Director: GPT-4.1-mini, 1 structure per start condition)}
  \label{tab:promptsens2}
\end{table}

\begin{figure}[h!]
    \centering
    \includegraphics[width=\linewidth]{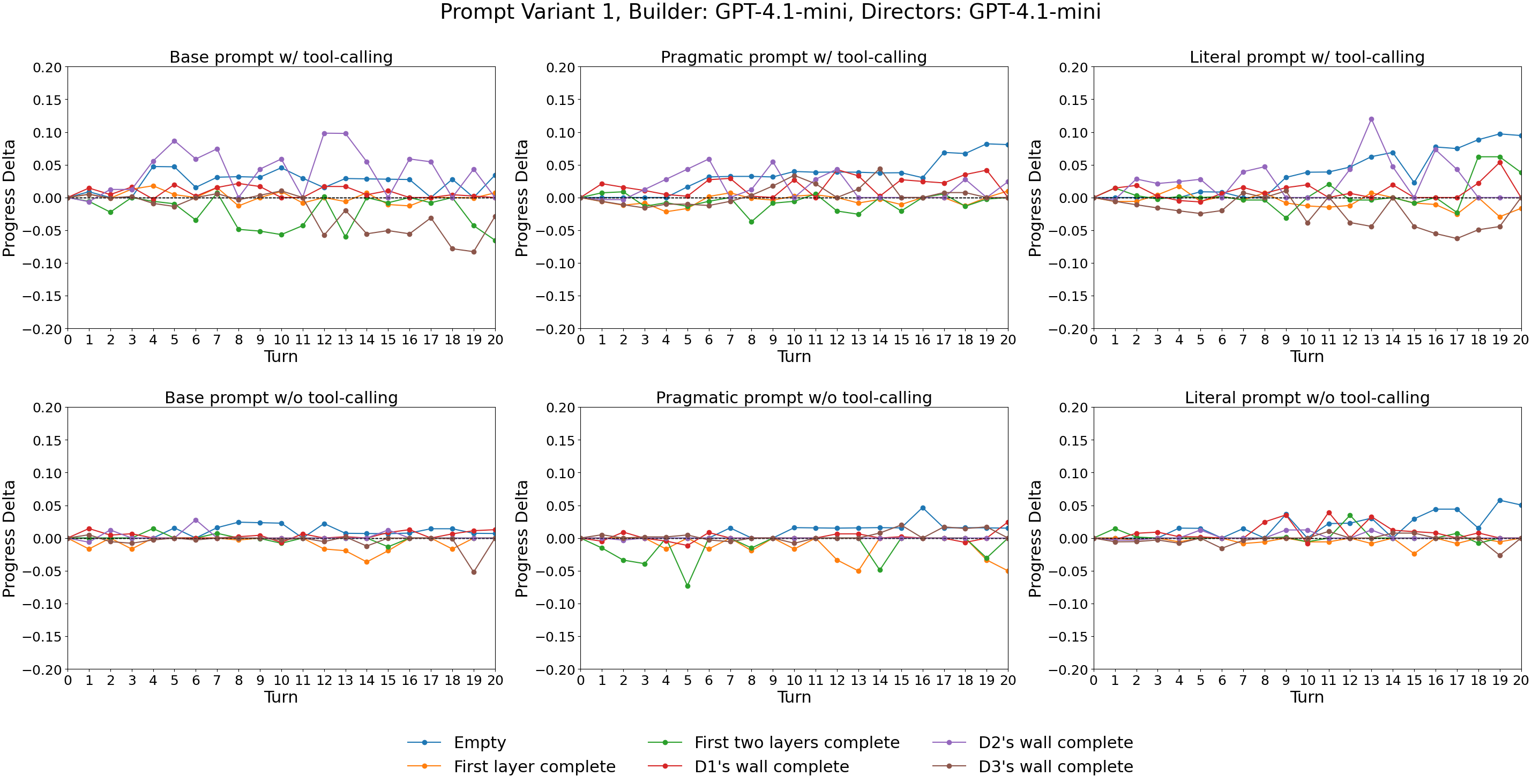}
    \caption{Prompting and tool-calling outcomes results with GPT-4.1-mini as Builder and GPT-4.1-mini as Directors, using Literal and Pragmatic Builder prompt variant 1 (same as versions used in main experiments).}
    \label{fig:combinedPrompt1}
\end{figure}

\begin{figure}[h!]
    \centering
    \includegraphics[width=\linewidth]{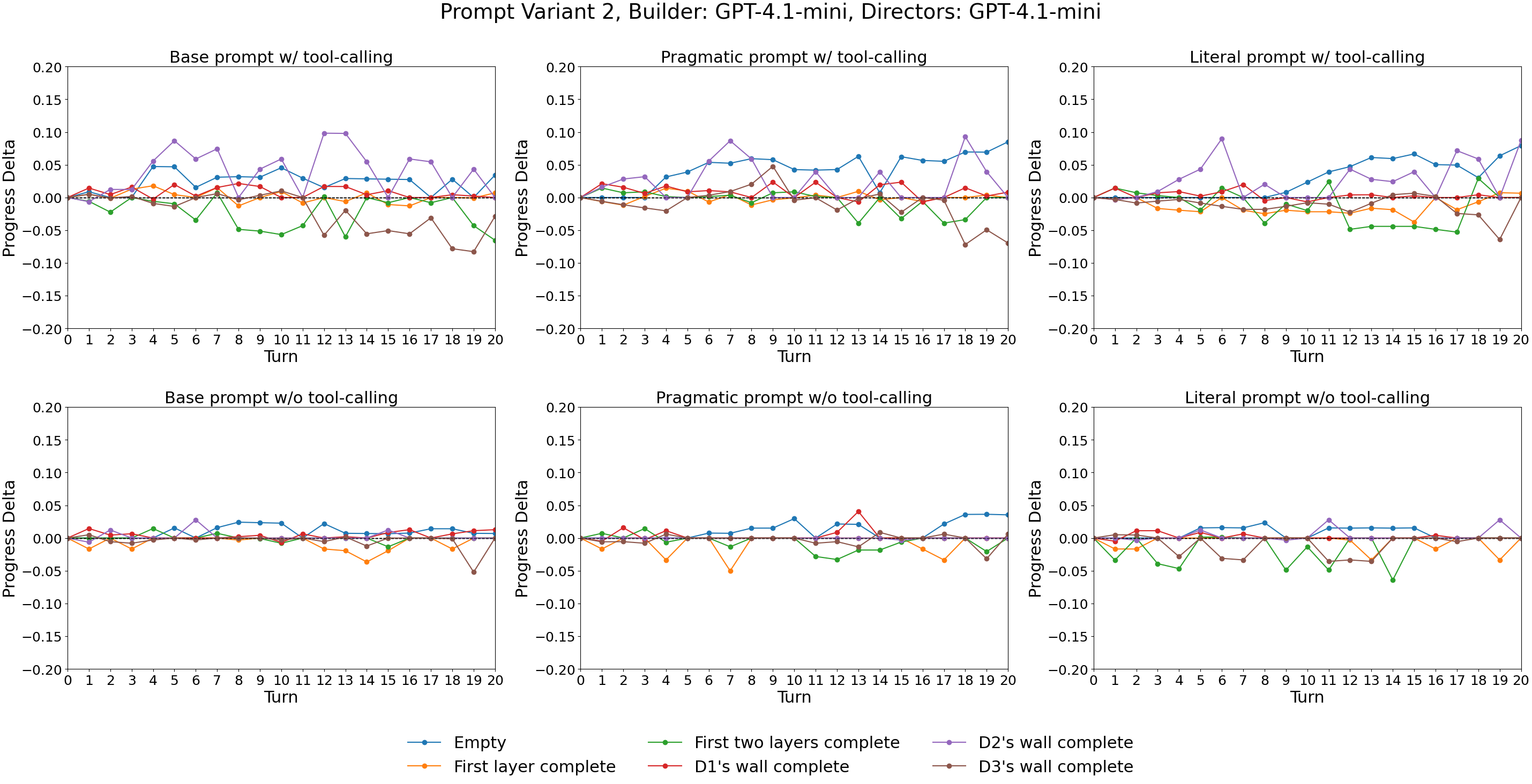}
    \caption{Prompting and tool-calling outcomes results with GPT-4.1-mini as Builder and GPT-4.1-mini as Directors, using Literal and Pragmatic Builder prompt variant 2 (Appendix~\ref{app:prompts}).}
    \label{fig:combinedPrompt2}
\end{figure}

\begin{figure}[h!]
    \centering
    \includegraphics[width=\linewidth]{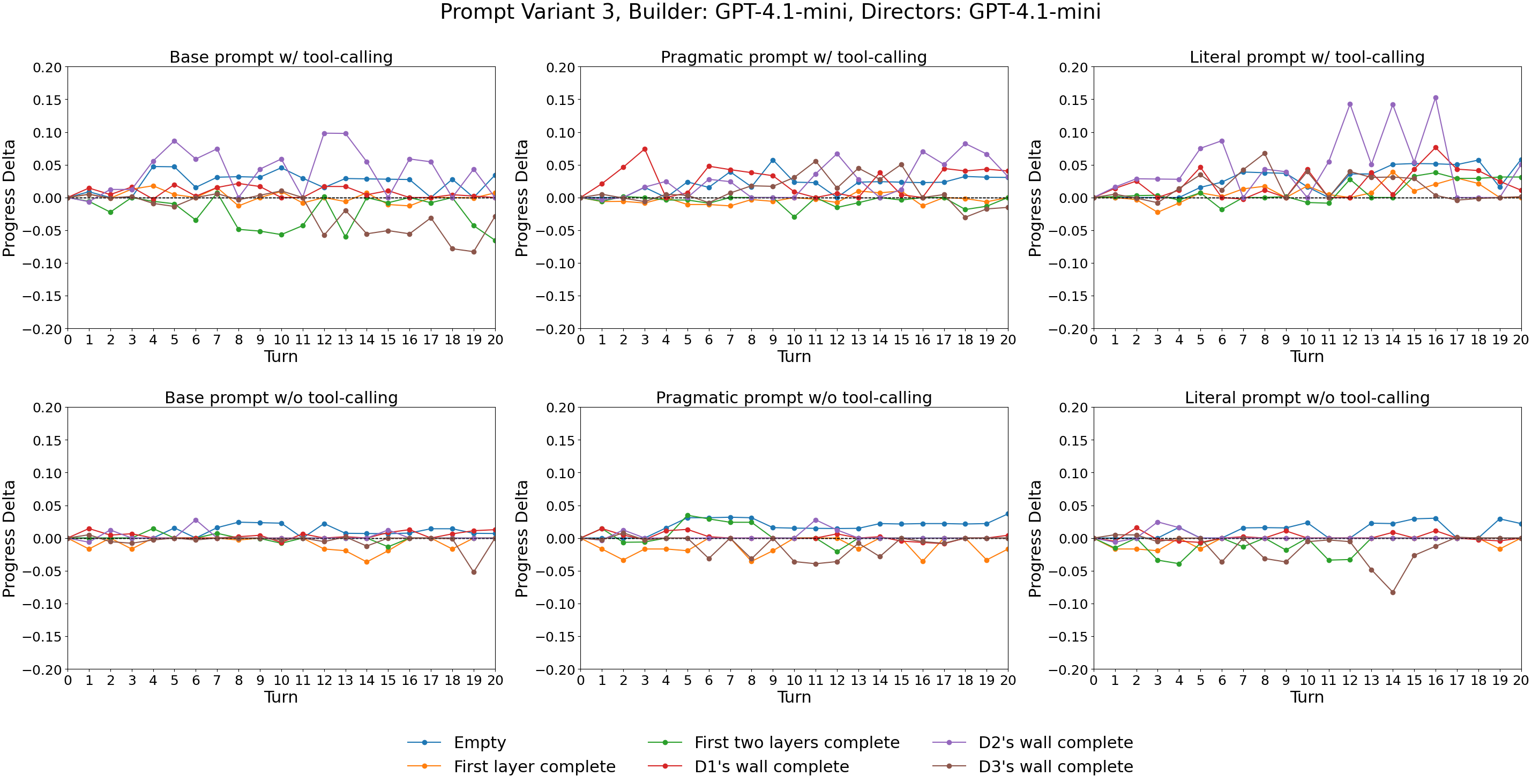}
    \caption{Prompting and tool-calling outcomes results with GPT-4.1-mini as Builder and GPT-4.1-mini as Directors, using Literal and Pragmatic Builder prompt variant 3 (Appendix~\ref{app:prompts}).}
    \label{fig:combinedPrompt3}
\end{figure}

These results show that trends are preserved with prompt rephrasing and that overall variance across rephrasings remains low.

\end{document}